\documentclass[11pt,draftclsnofoot,onecolumn]{IEEEtran}
%



\usepackage[utf8]{inputenc} 
\usepackage[T1]{fontenc}    
\usepackage{hyperref}       
\usepackage{url}            
\usepackage{booktabs}       
\usepackage{amsfonts}       
\usepackage{nicefrac}       
\usepackage{microtype}      
\usepackage{amsmath,dsfont}
\usepackage{epsfig}
\usepackage{algorithm,footnote}
\usepackage{algpseudocode}
\usepackage{syntonly,bm}
\usepackage[utf8]{inputenc}
\usepackage[english]{babel}
\usepackage{color}
\newtheorem{theorem}{Theorem}
\newtheorem{lemma}[theorem]{Lemma}
\allowdisplaybreaks[2]

\input{mysymbol.sty}

 \long\def\symbolfootnote[#1]#2{\begingroup
 	\def\thefootnote{\fnsymbol{footnote}}
 	\footnote[#1]{#2}\endgroup} \psfull

\title{Online Graph-Adaptive Learning \\with Scalability and Privacy}

%

\author{ Yanning Shen$^\ast$, \textit{Student Member}, \textit{IEEE},  Geert Leus$^\dag$, \textit{Fellow, IEEE}, \\and  Georgios~B.~Giannakis$^\ast$, \textit{Fellow, IEEE}}


\begin{document}

\maketitle

\symbolfootnote[0]{ Work in this paper was supported by grants NSF 1711471, 1500713 and NIH 1R01GM104975-01.}

\symbolfootnote[0]{$^\ast$Y. Shen and G. B. Giannakis are with the Dept.
	of ECE and the Digital Technology Center, University of
	Minnesota, 117 Pleasant Str. SE, Minneapolis, MN 55455. Tel: (612)625-4287; Emails:
	\texttt{\{shenx513,georgios\}@umn.edu } }
	
\symbolfootnote[0]{$^\dag$ G. Leus is with the Electrical Engineering, Mathematics and Computer Science,
Delft University of Technology, CD, Delft 2826, The Netherlands E-mail:,
\texttt{g.j.t.leus@tudelft.nl}}

\vspace{-12mm}
\begin{abstract}
Graphs are widely adopted for modeling complex systems, including financial, biological, and social networks. Nodes in networks usually entail  attributes, such as the age or gender of users in a social network. However, real-world networks can have very large size, and nodal attributes can be unavailable to a number of nodes, e.g., due to privacy concerns. Moreover, new nodes can emerge over time, which can necessitate real-time evaluation of their nodal attributes. In this context, the present paper deals with scalable learning of nodal attributes by estimating a nodal function based on noisy observations at a subset of nodes. A multikernel-based approach is developed which is scalable to large-size networks. Unlike most existing methods that re-solve the function estimation problem over all existing nodes whenever a new node joins the network, the novel method is capable of providing real-time evaluation of the function values on newly-joining nodes without resorting to a batch solver. Interestingly,  the novel scheme only relies on an encrypted version of  each node's connectivity in order to learn the nodal attributes, which promotes privacy. Experiments on both synthetic and real datasets corroborate the effectiveness of the proposed methods.
\end{abstract}

\section{Introduction}

Estimating nodal functions/signals over networks is a task emerging in various domains, such as social, brain, and power networks, to name a few. Functions of nodes can represent certain attributes or classes of these nodes. In Facebook for instance, each node represents a person, and the presence of an edge indicates that two persons are friends, while nodal attributes can be  age, gender or movie ratings of each person. In financial networks, where each node is a company, with links denoting trade between two companies, the function of the node can represent the category that each company belongs to, e.g., technology-, fashion-, or education-related.


\textcolor{black}{In real-world networks, there are often unavailable nodal function values, due to, e.g., privacy issues. Hence, a topic of great practical importance
is to interpolate missing nodal values (class, ranking or function), based on the function values at a subset of observed nodes. Interpolation of nodal function values often relies on the assumption of ``smoothness'' over the graphs, which implies that neighboring nodes  will have similar nodal function values. For example, in social networks, people tend to rate e.g., movies similar to their friends, and in financial networks, companies that trade with each other usually belong to the same category. 
From this point of view, }
function estimation over graphs based on partial observations has been investigated extensively,  \cite{kolaczyk2009statistical,kondor2002diffusion,belkin2006manifold,wasserman2008statistical,lu2003link,giannakis2018pieee}. 
Function estimation has been also pursued in the context of semi-supervised learning, e.g., for transductive regression or classification, see e.g., \cite{chapelle2009semi,chapelle2000transductive,cortes2007transductive,berberidis2018adaptive}. The same task has been studied recently  as signal reconstruction over graphs, see e.g., \cite{narang2013signal,wang2015local,romero2017kernel,marques2016sampling,shuman2013emerging}, where  signal values on  unobserved nodes can be estimated by properly introducing a graph-aware prior.
%
%
Kernel-based methods for learning over graphs offer a unifying framework that includes linear and nonlinear function estimators \cite{romero2017kernel,smola2003kernels,ioannidis2018inference}. The nonlinear methods outperform the linear ones but suffer from the curse of dimensionality \cite{wahba1990spline}, rendering them less attractive for large-scale networks.   

To alleviate this limitation, a scalable  kernel-based approach will be introduced in the present paper, which leverages the random feature approximation to ensure \emph{scalability} while also allowing \emph{real-time} evaluation of the functions over large-scale dynamic networks.  In addition, the novel approach incorporates a data-driven scheme for \emph{adaptive} kernel selection.

Adaptive learning over graphs has been also investigated for tracking and learning over possibly dynamic networks, e.g., \cite{di2018adaptive,ioannidis2018inference}. Least mean-squares and recursive least-squares adaptive schemes have been developed in \cite{di2018adaptive}, without explicitly accounting for evolving network topologies. In contrast, \cite{ioannidis2018inference} proposed a
kernel-based reconstruction scheme to track time-varying
signals over  time-evolving topologies, but assumed that the kernel function is selected a priori. All these prior works assume that the network size is fixed.

In certain applications however, new nodes may join the network over time. For example, hundreds of new users are joining Facebook or Netflix every day, and new companies are founded in financial networks regularly. Real-time and scalable estimation of the desired functions on these newly-joining nodes is of great importance. While simple schemes such as averaging over one- or multi-hop neighborhoods are scalable to network size by predicting the value  on each newly-coming node as a weighted combination of its multi-hop neighborhoods~\cite{altman1992introduction}, they do 
not capture global information over the network. In addition, existing rigorous approaches are in general less efficient in accounting for newly-joining nodes, and need to solve the  problem over all nodes, every time  new nodes join the network, which incurs complexity $\mathcal{O}(N^3)$, where $N$ denotes the network size~\cite{romero2017kernel,smola2003kernels}. As a result, these methods are not amenable to real-time evaluation over newly-joining nodes. To this end, the present paper develops a scalable \emph{online graph-adaptive} algorithm that can efficiently estimate nodal functions on  newly-joining nodes `on the fly.'

Besides scalability and adaptivity, nodes may have firm \emph{privacy} requirements, and may therefore not be willing to reveal who their neighbors are. However, most graph-based learning methods require knowing the entire connectivity pattern, and thus cannot meet the privacy requirements. The novel random feature based approach on the other hand, only requires an encrypted version of each node's connectivity pattern, which makes it appealing for  networks with stringent privacy constraints.

In short, we put forth a novel online multikernel learning (MKL) framework for effectively learning and tracking nonlinear functions over graphs. Our contributions are as follows.

\noindent\textbf{c1)} A scalable MKL approach is developed to efficiently estimate the nodal function values both on the observed and un-observed nodes of a graph;\\
\noindent\textbf{c2)} The resultant algorithm is capable of estimating the function value of newly incoming nodes with high accuracy without having to solve the batch problem over all nodes, making it highly scalable as the network size grows, and suitable for nodal function estimation in dynamic networks; 
\\
\noindent\textbf{c3)} Unlike most existing methods that rely on nodal feature vectors in order to learn the function, the proposed scheme simply capitalizes on the connectivity pattern of each node, while at the same time, nodal feature vectors can be easily incorporated if available; and,\\
\noindent\textbf{c4)} The proposed algorithm does not require nodes  to share connectivity patterns. Instead, a privacy-preserving scheme is developed for estimating the nodal function values based on an encrypted version of the nodal connectivity patterns, hence respecting node privacy.


The rest of this paper is organized as follows. Preliminaries are  in Section~\ref{sec:pre}, while Section~\ref{sec:online} presents an online kernel-based algorithm that allows sequential processing of nodal samples. Section~\ref{sec:gradraker} develops an online MKL scheme for sequential data-driven kernel selection, which allows graph-adaptive selection of kernel functions to best fit the learning task of interest.
Finally, results of corroborating numerical tests on both synthetic and real data are presented in Section~\ref{sec:test}, while concluding remarks along with a discussion of ongoing and future directions are given in Section~\ref{sec:con}.

\noindent\textit{Notation}. Bold uppercase (lowercase) letters denote matrices (column vectors), while  $(\cdot)^{\top}$ and $\lambda_i(.)$ stand for matrix transposition, and the $i$th leading eigenvalue of the matrix argument, respectively. The identity matrix will be represented by $\mathbf{I}$, while $\mathbf{0}$ will denote the matrix of all zeros, and their dimensions will be clear from the context. Finally, the $\ell_p$ and Frobenius norms will be denoted by $\|\cdot\|_p$, and $\|\cdot\|_F$, respectively. 

\section{Kernel-based learning over graphs}\label{sec:pre}

{\color{black}
Consider a graph $\mathcal{G}(\mathcal{V},\mathcal{E})$ of $N$ nodes, whose topology is captured by a known adjacency matrix $\bbA\in\mathbb{R}^{N\times N}$. Let $a_{nn'}\in\mathbb{R}$ denote the $(n,n')$ entry of $\bbA$, which is nonzero only if an edge is present from node $n'$ to $n$. 
A real-valued function (or signal) on a graph is a mapping $f:  {\cal V} \rightarrow \mathbb{R}$, where ${\cal V}$ is the set of vertices. The value $f(v)=x_v$ represents an attribute of $v \in {\cal V}$, e.g., in the context of brain networks, $x_{v_n}$ could represent the sample of an electroencephalogram (EEG), or functional magnetic resonance imaging (fMRI) measurement at region $n$. In a social network, $x_{v_n}$ could denote the age, political alignment, or annual income of the $n$th person. \textcolor{black}{Suppose that a collection of noisy samples $\{y_m=x_{v_{n_m}}+e_m\}_{m=1}^M$
is available, where $e_m$ models noise, and $M\leq N$ represents the number of measurements. Given $\{y_m\}_{m=1}^M$, and with the graph topology known, the goal is to estimate $f(v)$, and thus reconstruct the graph signal at unobserved vertices. Letting
$ \bby:= [y_1, \ldots, y_M ]^\top$, the observation vector obeys
\begin{align}
\label{eq:measure}
	\bby=\bbPsi\bbx+\bbe
\end{align}
 where $\bbx:=[x_{v_1}, \dots, x_{v_N}]^\top$, $\bbe:=[e_1, \dots, e_M]^\top$, and $\bbPsi\in \{0,1\}^{M\times N}$ is a sampling matrix with binary entries $[\bbPsi]_{m,n_m}=1$ for $m=1, \dots, M$, and $0$, elsewhere.}

Given $\bbPsi$,  $\bby$, and $\bbA$, the goal is to estimate $\bbx$ over the entire network. 
{To tackle the under-determined system \eqref{eq:measure},} consider function $f$ belonging to
 a reproducing kernel Hilbert space (RKHS) defined as~\cite{smola2003kernels,romero2017kernel}
\begin{align}
\label{eq:def:grkhs}
	\mathcal{H}:=\{f:f(v)=\sum_{n=1}^N\alpha_n \kappa(v,v_n), \alpha_n \in \mathbb{R}\}
\end{align}
where $\kappa: \mathcal{V}\times \mathcal{V}\rightarrow \mathbb{R}$ is a pre-selected kernel function. Hereafter, we will let $n_m=m$ for notational convenience, and without loss of generality (wlog). Given  $\bby$, the RKHS-based estimate is formed as
\begin{equation}\label{opt0}
	\hat{f}= \arg \min_{f\in \mathcal{H}}~\frac{1}{M}\sum_{m=1}^M{\cal C}(f(v_{m}),y_m)+\mu\Omega\left(\|f\|_{\mathcal{H}}^2\right)
\end{equation}
where the cost ${\cal C}(\cdot,\cdot)$ can be selected depending on the learning task, e.g., the least-squares (LS) for regression, or the logistic loss for classification; $\|f\|_{\mathcal{H}}^2:=\sum_{n}\sum_{n'} \alpha_n\alpha_{n'}\kappa(v_n,v_{n'})$ is the RKHS norm; $\Omega(\cdot)$ is an increasing function; and, $\mu>0$ is a regularization parameter that {copes with} overfitting.  According to the definition of graph RKHS in \eqref{eq:def:grkhs}, the function estimate can be written as $\hat{f}(v)=\sum_{n=1}^N \alpha_n \kappa(v, v_n):=\bar{\bbalpha}^\top \bar{\mathbf{k}}(v)$, where $\bar{\bm{\alpha}}:=[\alpha_1,\ldots,\alpha_N]^{\top}\!\in\mathbb{R}^N$ collects the basis coefficients, and  $\bar{\mathbf{k}}({v}):=[\kappa(v,v_1),\ldots,\kappa(v,v_N)]^{\top}\!$.
Substituting into the RKHS norm, we find $\|f\|_{\mathcal{H}}^2:=\sum_{n}\sum_{n'} \alpha_n\alpha_{n'}\kappa(v_n,v_{n'})=\bar{\bm{\alpha}}^{\top}\bar{\mathbf{K}}\bar{\bm{\alpha}}$, where the $N\times N$ kernel matrix $\bar{\mathbf{K}}$ has entries $[\bar{\mathbf{K}}]_{n,n'}:=\kappa(v_n,v_{n'})$; thus, the functional problem \eqref{opt0} boils down to 
{\color{black}
\begin{equation}\label{eq:opt1}
	\min_{\bar{\bm{\alpha}}\in\mathbb{R}^N}\!\frac{1}{M}\sum_{m=1}^M{\cal C}(\bar{\bm{\alpha}}^{\top}\bar{\mathbf{k}}(v_{m}),y_m)+\mu \Omega\left(\bar{\bm{\alpha}}^{\top}\bar{\mathbf{K}}\bar{\bm{\alpha}}\right).
\end{equation}
}
According to the representer theorem, the optimal solution of \eqref{opt0} admits the finite-dimensional form given by \cite{smola2003kernels,romero2017kernel}
\begin{equation}\label{eq:sol0}
	\hat{f}(v)=\sum_{m=1}^M\alpha_m \kappa(v,v_{m})
	:=\bm{\alpha}^{\top}{\mathbf{k}}(v).
\end{equation}
where $\bm{\alpha}:=[\alpha_1,\ldots,\alpha_M]^{\top}\!\in\mathbb{R}^M$, and  $\mathbf{k}(v):=[\kappa(v,v_1),\ldots,\kappa(v,v_M)]^{\top}\!$.
This means that the coefficients corresponding to the unobserved nodes are all zeros. This implies that the function over the graph can be estimated by optimizing over the $M\times 1$ vector $\bbalpha$ [cf. \eqref{opt0}]
\begin{equation}\label{eq:opt2}
	 \min_{{\color{black}\bm{\alpha}\in\mathbb{R}^M}}~\frac{1}{M}\sum_{m=1}^M{\cal C}({\bm{\alpha}}^{\top}{\mathbf{k}}(v_m),y_m)+\mu \Omega\left({\bm{\alpha}}^{\top}\bbK{\bm{\alpha}}\right)
\end{equation}
where $\bbK:=\bbPsi^\top\bar{\mathbf{K}}\bbPsi$. For general kernel-based learning tasks, $\bar{\bbK}$ is formed using the nonlinear functions of pairwise correlations $\kappa(v_n, v_{n'})=\bbphi_{n}^\top\bbphi_{n'}$, where $\bbphi_{n}$ denotes the feature vector of node $n$, which can collect, for example, the buying history of users on Amazon, or the trading history of companies in financial networks. However, such information may not be available in practice, due to, e.g., privacy concerns. This has motivated the graph-kernel based approaches in  \cite{romero2017kernel} and \cite{smola2003kernels}, to reconstruct the graph signal when only the network structure is available, and the kernel matrix is selected as a nonlinear function of the graph Laplacian matrix. Specifically, these works mostly consider undirected networks,  $\bbA=\bbA^\top$. 
	
Given the normalized Laplacian matrix $\bbL:=\bbI-\bbD^{-1/2}\bbA\bbD^{-1/2}$, with $\bbD:={\rm diag}(\bbA\mathbf{1})$, and letting $\bbL:=\bbU\bbLambda\bbU^\top$, the family of graphical  kernels is}
\begin{align}
\label{eq:gk}
	\bar{\bbK}:=r^\dagger(\bbL):=\bbU r^\dagger(\bbLambda)\bbU^\top
\end{align}
where $r(.)$ is a non-decreasing scalar function of the eigenvalues, and $^\dagger$ denotes pseudo-inverse. By selecting $r(.)$, different graph properties can be accounted for, including smoothness, band-limitedness, the random walk~\cite{smola2003kernels}, and diffusion~\cite{kondor2002diffusion}.
 
Although graph-kernel based methods are effective in reconstructing signals over graphs, it can be observed from \eqref{eq:gk} that formulating $\bar{\bbK}$ generally requires an eigenvalue decomposition of $\bbL$, which incurs complexity ${\cal O}(N^3)$ that can be prohibitive for large-scale networks. Moreover, even though nodal feature vectors $\{ {\mathbf \phi}_n\}$ are not necessary to form $\bar{\bbK}$, the graph-kernel-based scheme requires knowledge of the topology, meaning $\bbA$, in order to estimate the nodal function of each node. However, in networks with strict privacy requirements, nodes may not be willing to share such information with others. In Facebook, for example, most people do not make their friend list public. In addition, solving \eqref{eq:opt1} assumes that all sampled nodes are available in batch, which may not be true in scenarios where nodes are sampled in a sequential fashion.

In response to these challenges, an online scalable kernel-based method will be developed in the ensuing section to deal with sequentially obtained data samples, over generally dynamic networks. The resultant algorithm only requires encrypted versions of the nodal connectivity patterns of other nodes, and hence it offers privacy. 

\section{Online kernel-based learning over graphs}\label{sec:online}
Instead of resorting to a graph kernel that requires an eigenvalue decomposition of $\bbL$ in \eqref{eq:gk}, the present section advocates treating the \emph{connectivity pattern of each node as its feature vector}, which can be the $n$th column $\bba_n^{(c)}$ and possibly the $n$th row $(\bba_n^{(r)})^\top$ of the adjacency (if $\bbA$ is nonsymmetric). We will henceforth term this the \emph{connectivity pattern} of $v_n$, and denote it as $\bba_n$, for brevity. Given $\bba_n$, we will interpolate unavailable nodal function values $\hat{f}(v_n)$ using a nonparametric approach, that is different and scalable relative to \cite{smola2003kernels} and \cite{romero2017kernel}. The kernel matrix is now
\begin{align}\label{eq:gka}
	[\bar{\bbK}]_{n,n'}=\kappa(v_n,v_{n'})=\kappa(\bba_n,\bba_{n'}).
\end{align}
{Again, with $M$ nodes sampled,} 
the representer theorem asserts that the sought function estimator has the form \cite{wahba1990spline}
\begin{align}
\label{eq:f:graph}
	\hat{f}(v_n)=\hat{f}(\bba_n)=\sum_{m=1}^M\alpha_m \kappa(\bba_m,\bba_n):=\bm{\alpha}^{\top}\mathbf{k}(\mathbf{a}_n)
\end{align}
where $\bbk(\bba_n):=[\kappa(\bba_n,\bba_1) \dots \kappa(\bba_n,\bba_M)]^\top$.
It can be observed from \eqref{eq:f:graph} that  $\hat{f}(v_n)$ involves the adjacency of the entire network, namely $\{\bba_m\}_{m=1}^M$, which leads to potentially growing complexity ${\cal O}(M^3)$ as the number of sampled nodes increases~\cite{wahba1990spline}. 

\subsection{Batch RF-based learning over graphs}
To bypass this growing complexity, we will resort to the so-called  random feature approximation \cite{rahimi2007} in order to reduce the original functional learning task in \eqref{eq:opt1}  to a problem with the number of unknown parameters not growing with $M$. 
We first approximate $\kappa$ in \eqref{eq:sol0} using random features (RFs) \cite{rahimi2007,shen2018aistats} that are obtained from a shift-invariant kernel satisfying $\kappa(\bba_n,\bba_{n'})=\kappa(\bba_n-\bba_{n'})$. For $\kappa(\bba_n-\bba_{n'})$ absolutely integrable,
its Fourier transform $\pi_{\kappa} (\bf v)$ exists and represents the power spectral density, which upon normalizing to ensure $\kappa(\mathbf{0})=1$, can also be 
viewed as a probability density function (pdf); hence,  
\begin{align}
\label{ieq.kx1}
\kappa(\bba_n-\bba_{n'}) &= \int \!\!\pi_{\kappa}(\bbv)e^{j\bbv^\top(\bba_n-\bba_{n'})} d\bbv \nonumber\\
	&:=\mathbb{E}_{\bbv}\big[e^{j\bbv^\top(\bba_n-\bba_{n'})}\big]	
\end{align}
where the last equality is due to the definition of the expected value. Drawing a sufficient number of $D$ independent and identically distributed samples $\{\bbv_i\}_{i=1}^D$ from $\pi_{\kappa}(\bbv)$, the ensemble mean \eqref{ieq.kx1} can be approximated by the sample average  
\begin{equation}\label{eq.ker-quad}
\hat{\kappa}(\bba_n,\bba_{n'})=\bbz_{\bbV}^\top(\bba_n)\bbz_{\bbV}(\bba_{n'})
\end{equation}
where $\bbV:=[\bbv_1, \dots, \bbv_D]^\top \in \mathbb{R}^{D\times N}$, and $\bbz_{\bbV}$ denotes the $2D\times 1$ \emph{real-valued} RF vector  
\begin{align}\label{rep:z}
\bbz_{\bbV}(\bba)&=D^{-\frac{1}{2}}\\
&\times\left[\sin(\bbv_1^\top \bba),\ldots, \sin(\bbv_D^\top \bba), \cos(\bbv_1^\top \bba), \ldots, \cos(\bbv_D^\top \bba)\right]^{\top}\!.\nonumber
\end{align}
Taking expectations in \eqref{eq.ker-quad} and using \eqref{ieq.kx1}, one can verify that
$\mathbb{E}_{\bbv}[\hat{\kappa}(\bba_n,\bba_{n'})]=\kappa(\bba_n,\bba_{n'})$, which means
${\hat \kappa}$ is unbiased. Note that finding $\pi_{\kappa}(\bbv)$ requires an $N$-dimensional Fourier transform of $\kappa$, which in general requires numerical integration. Nevertheless, it has been shown that for a number of popular kernels, $\pi_{\kappa}(\bbv)$ is available in closed form \cite{rahimi2007}. Taking the Gaussian kernel as an example, where $\kappa(\bba_n,\bba_{n'})=\exp\big(\|\bba_n-\bba_{n'}\|_2^2/(2\sigma^2)\big)$, it has a Fourier transform corresponding to the pdf $\mathcal{N}(0,\sigma^{-2}\bbI)$.

 Hence, the function that is optimal in the sense of \eqref{opt0} can be cast to a function approximant over the $2D$-dimensional RF space (cf. \eqref{eq:f:graph} and \eqref{eq.ker-quad})
\begin{align}
\label{eq:rf:fx}
	\hat{f}^{\rm RF}(\bba)=\sum_{m=1}^M \alpha_m \bbz_{\bbV}^\top(\bba_m)\bbz_{\bbV}(\bba):=\bbtheta^\top\bbz_{\bbV}(\bba)
\end{align}
where $\bbtheta^{\top}:=\sum_{m=1}^M \alpha_m \bbz_{\bbV}^{\top}(\bba_m)$.
While $\hat{f}$ in \eqref{eq:sol0} is the superposition of nonlinear functions $\kappa$, its RF approximant $\hat{f}^{\rm RF}$ in \eqref{eq:rf:fx} is a linear function of $\bbz_{\bbV}(\bba_i)$. 
As a result, \eqref{opt0} reduces to
\begin{equation}\label{opt2}
	 \min_{\bbtheta\in \mathbb{R}^{2D}}~\frac{1}{M}\sum_{m=1}^M{\cal C}(\bbtheta^\top\bbz_{\bbV}(\bba_m),y_m)+\mu\Omega\left(\|\bbtheta\|^2\right)
\end{equation}
where  $\|\bbtheta\|^2:=\sum_{t}\sum_{\tau}\alpha_t\alpha_{\tau}\bbz_{\bbV}^\top(\bba_t)\bbz_{\bbV}(\bba_{\tau}):=\|f\|_{\cal H}^2$. A batch solver of \eqref{opt2} has complexity $\mathcal{O}(MD^3)$ that does not grow with $N$. This batch RF-based approach scales linearly with the number of measured nodes $M$, and the number of variables is $2D$, which does not depend on $M$. This allows us to pursue an online implementation as elaborated next.

{\color{black}
\subsection{Online RF-based learning over graphs}
Here, we will further leverage RF-based learning over graphs to enable real-time learning and reconstruction of signals evolving over possibly dynamic networks. A scalable online algorithm will be introduced, which can adaptively handle sequentially sampled nodal features and update the sought function estimates. 

\noindent\textbf{Training sequentially.} In the training phase, we are given a network of $N$ nodes, and the nodal function is sampled in a sequential fashion.
Letting $v_t$ denote the node sampled at the $t$th time slot, and having available $\{\bba_t,y_t\}$ at $v_t$, the online inference task can be written as [cf. \eqref{opt2}]
\begin{align}\label{eq:rf-task}
\hspace{-10mm}&\min_{\bm{\theta}\in\mathbb{R}^{2D}}\, \sum_{\tau=1}^t{\cal L}\left(\bbtheta^\top\bbz_{\bbV}(\bba_{\tau}),y_{\tau}\right)\\
& {\cal L}\big(\bbtheta^\top\bbz_{\bbV}(\bba_t),y_t\big):={\cal C}\big(\bbtheta^\top\bbz_{\bbV}(\bba_t),y_t\big)+\mu \Omega\big(\|\bbtheta\|^2\big).\nonumber
\end{align}
We will solve  \eqref{eq:rf-task} using online gradient descent \cite{hazan2016}. Obtaining $v_t$ per slot $t$, the RF of its connectivity pattern $\bbz_{\bbV}(\bba_t)$ is formed as in \eqref{rep:z}, and $\bbtheta_{t+1}$ is updated `on the fly,' as 
  \begin{align}\label{eq:weit-rf}
  	\bbtheta_{t+1}=\bbtheta_t-\eta_t \nabla{\cal L}(\bbtheta_t^\top\bbz_{\bbV}(\bba_t),y_t)
  \end{align}
where $\{\eta_t\}$ is the sequence of stepsizes that can tune learning rates. 
\textcolor{black}{In this paper, we will adopt $\eta_t=\eta$ for simplicity}. 
Iteration \eqref{eq:weit-rf} provides \emph{a functional update} since $\hat{f}^{\rm RF}_t(\bba)=\bbtheta_t^\top\bbz_{\bbV}(\bba)$.  The per-iteration complexity of \eqref{eq:weit-rf} is $\mathcal{O}(D)$, and $\mathcal{O}(MD)$ for the entire training process, which scales better than~$\mathcal{O}(MD^3)$ that is required for a batch solver of \eqref{opt2}.

\noindent\textbf{Inferring unavailable nodal values.}
After the training phase, the nodal function value over the un-sampled nodes can be readily estimated by [cf. \eqref{eq:rf:fx}]
\begin{align}\label{eq:fx:inter}
	\hat{f}(v_i)=\hat{\bbtheta}^\top \bbz_{\bbV}(\bba_i), ~~\forall i \in {\cal S}^c
\end{align}
\textcolor{black}{where $\hat{\bbtheta}$ is the final estimate after the training phase, i.e., $\hat{\bbtheta}=\bbtheta_{M+1}$, and} ${\cal S}^c$ denotes the index set of the nodes whose signal values have not been sampled in the training phase.

\noindent\textbf{Newly-joining nodes.} When new nodes join the network, batch graph-kernel based approaches must expand $\bar{\bbK}$ in \eqref{eq:gk} by one row and one column, and re-solve \eqref{eq:opt2} in order to form signal estimates for the newly-joining nodes. Hence, each newly joining node will incur complexity $\mathcal{O}(N^3)$. The novel online RF method on the other hand, can simply estimate the signal on the newly coming node via $\hat{f}(v_{\rm new})=\hat{\bbtheta}\bbz_{\bbV}(\bba_{\rm new})$,  where $\bba_{\rm new}\in\mathbb{R}^N$ denotes the connectivity pattern of the new node with the \emph{existing} nodes in the network. This leads to a complexity of $\mathcal{O}(ND)$ per new node. If in addition, $y_{\rm new}$ is available, then the function estimate  can also be efficiently updated  via \eqref{eq:weit-rf} and \eqref{eq:rf:fx} using $\bba_{\rm new}$ and $y_{\rm new}$.

 The steps of our online  RF-based method are summarized in Algorithm~\ref{algo:okl}. A couple of standard learning tasks where Algorithm~\ref{algo:okl} comes handy are now in order.

\emph{Nonlinear regression over graphs.}
Consider first nonlinear regression over graphs, where  the goal is to find a nonlinear function $f\in{\cal H}$, such that $y_n=f(v_n)+e_n=f(\bba_n)+e_n$ given the graph adjacency matrix $\bbA$. The criterion is to minimize the regularized prediction error of $y_n$, typically using the online LS loss ${\cal L}(f(\bba_t),y_t):=[y_t - f(\bba_t)]^2+\mu\|f\|_{\cal H}^2$ in \eqref{eq:rf-task}, whose gradient is (cf. \eqref{eq.klp-weight})
{\color{black}
\begin{align}
& \nabla{\cal L}\left(\bbtheta_{t}^\top\bbz_{\bbV}(\bba_t),y_t\right)=2[\bbtheta_{t}^\top\bbz_{\bbV}(\bba_t)-y_t]\bbz_{\bbV}(\bba_t)+2\mu \bbtheta_{t}.\nonumber
\end{align}
}
In practice, $y_t$ can represent a noisy version of each node's  real-valued attribute, e.g., temperature in a certain city, and the graph can be constructed based on Euclidean distances among cities. 
For a fair comparison with alternatives, only the regression task will be tested in the numerical section of this paper.

\emph{Nonlinear classification over graphs.} We can also handle kernel-based perceptron and kernel-based logistic regression, which aim at learning a nonlinear classifier that best approximates either $y_n$, or, the pdf of $y_n$ conditioned on $\bba_n$. 
With binary labels $\{\pm 1\}$, the perceptron solves \eqref{opt0} with ${\cal L}(f(\bba_t),y_t)=\max(0,1-y_t f(\bba_t))+\mu\|f\|_{\cal H}^2$, which equals zero if $y_t=f(\bba_t)$, and otherwise equals $1$. In this case, the gradient of the presented online RF-based method is (cf. \eqref{eq.klp-weight})
{\color{black}
\begin{align}
	\nabla {\cal L}&\left(\bbtheta_{t}^\top\bbz_{\bbV}(\bba_t),y_t\right)=-2y_t{\cal C}(\bbtheta_{t}^\top\bbz_{\bbV}(\bba_t),y_t)\bbz_{\bbV}(\bba_t)
	+2\mu \bbtheta_{t}.\nonumber
\end{align}
}
Accordingly, given ${\bf x}_t$, logistic regression postulates that ${\rm Pr}(y_t=1|\bbx_t )=1/(1+\exp(f(\bbx_t)))$.
Here the gradient takes the form (cf. \eqref{eq.klp-weight})
\begin{align}
{\color{black}
	\nabla {\cal L}\!\!\left(\bbtheta_{t}^\top\bbz_{\bbV}(\bba_t),y_t\right)\!=\!\frac{2y_t\exp(y_t\bbtheta_{t}^\top\bbz_{\bbV}(\bba_t))}{1+\exp(y_t\bbtheta_{t}^\top\bbz_{\bbV}(\bba_t))}\bbz_p(\bbx_t)+2\mu \bbtheta_{t}.\nonumber
	}
\end{align}
Classification over graphs arises in various scenarios, where $y_n$ may represent categorical attributes such as gender, occupation or, nationality of users in a social network.
 
\noindent\textbf{Remark 1 (Privacy).} Note that the update in \eqref{eq:weit-rf} does not require access to $\bba_t$ directly. Instead, the only information each node needs to reveal is  $\bbz_{\bbV}(\bba_t)$ for each $\bba_t$, which involves $\{\sin(\bba_t^\top \bbv_{j}),~ \cos(\bba_t^\top \bbv_{j})\}_{j=1}^D$. Being noninvertible, these co-sinusoids functions involved in generating the $\mathbf{z}_{\mathbf{V}}(\mathbf{a}_t)$ can be viewed as an encryption of the nodal connectivity pattern, which means that given  $\mathbf{z}_{\mathbf{V}}(\mathbf{a}_t)$, vector $\mathbf{a}_t$ cannot be uniquely deciphered. Hence, Algorithm \ref{algo:okl} preserves privacy. } 

\noindent\textcolor{black}{\textbf{Remark 2 (Directed graphs).} It can be observed from \eqref{eq:gk} that for $\bar{\bbK}$ to be a valid kernel, graph-kernel based methods require $\bbA$, and henceforth $\bbL$ to be symmetric, which implies they can only directly deal with symmetric/undirected graphs. Such a requirement is not necessary for our RF-based method.}

\noindent\textcolor{black}{\textbf{Remark 3 (Dynamic graphs).}
Real-world networks may vary  over time, as edges may disappear or appear. To cope with such changing topologies, the original graph-kernel method needs to recalculate the kernel matrix, and resolve the batch problem whenever one edge changes. In contrast, our online RF-based method can simply re-estimate the nodal values on the two ends of the (dis)appeared edge using \eqref{eq:rf:fx} with their current $\{\bba_n\}$.
}




Evidently, the performance of Algorithm \ref{algo:okl} depends on $\kappa$ that is so far considered known. As the ``best'' performing $\kappa$ is generally unknown and application dependent, 
it is prudent to adaptively select kernels by superimposing multiple kernels from a prescribed dictionary, as we elaborate next.

\begin{algorithm}[t] 
	\caption{Online kernel based learning over graphs}\label{algo:okl}
	\begin{algorithmic}[1]
		\State\textbf{Input:} step size $\eta>0$, and number of RFs $D$.
		
		
		\State\textbf{Initialization:}~$\bbtheta_{1}=\mathbf{0}$.
		\State\textbf{Training:}
		\For {$t = 1, 2,\ldots, M$}

			
		\State Obtain the adjacency vector $\bba_t$ of  sampled node $v_t$ . 
		\State Construct $\bbz_{p}(\bba_t)$ via \eqref{rep:z} using $\kappa$. 
		\State Update $\bbtheta_{t+1}$ via \eqref{eq:weit-rf}.
		\EndFor
		\State \textbf{Inference: }\\
		\hspace{6mm}Construct random feature vector $\bbz_{\bbV}(\bba_j)$ via \eqref{rep:z}\\
		\hspace{6mm}Infer
		 $\hat{f}(v_j)=\bbtheta_{M+1}^\top\bbz_{\bbV}(v_j),~~j\in \Omega.$
		 \State \textbf{Accounting for newly-coming node}\\
		 \hspace{6mm}Construct random feature vector $\bbz_{\bbV}(\bba_{\rm new})$ via \eqref{rep:z}\\
		\hspace{6mm}Estimate
		 $\hat{f}(v_{\rm new})=\bbtheta_{M+1}^\top\bbz_{\bbV}(v_{\rm new}).$\\
		 \hspace{6mm}If $y_{\rm new}$ available, Update $\bbtheta$ via \eqref{eq:weit-rf}.
	\end{algorithmic}
\end{algorithm}

\section{Online Graph-adaptive MKL}\label{sec:gradraker}

In the present section, we develop an online \textbf{gr}aph-\textbf{ad}aptive learning approach that relies on \textbf{ra}ndom features, and leverages multi-\textbf{ker}nel approximation  to estimate the desired $f$ based on sequentially obtained nodal samples over the graph. The proposed method is henceforth abbreviated as \textbf{Gradraker}. 

The choice of $\kappa$ is critical for the performance of single kernel based learning over graphs, since different kernels capture different properties of the graph, and thus lead to function estimates of variable accuracy \cite{romero2017kernel}. To deal with this, combinations of kernels from a preselected dictionary $\{\kappa_p\}_{p=1}^P$ can be employed in \eqref{opt0}; see also \cite{romero2017kernel,shen2018aistats}. Each combination belongs to the convex hull $\bar{{\cal K}}:=\{\bar{\kappa}=\sum_{p=1}^P \bar{\alpha}_p \kappa_p,\, \bar{\alpha}_p\geq 0,\,\sum_{p=1}^P\bar{\alpha}_p=1\}$. With $\bar{\cal H}$ denoting the RKHS induced by $\bar{\kappa}\in \bar{{\cal K}}$, one then solves \eqref{opt0} with ${\cal H}$ replaced by 
$\bar{\cal H}:={\cal H}_1\bigoplus\cdots\bigoplus{\cal H}_P$,
where $\{{\cal H}_p\}_{p=1}^P$ represent the RKHSs corresponding to $\{\kappa_p\}_{p=1}^P$~\cite{cortes2009}.

The candidate function ${\bar f} \in \bar{\cal H}$ is expressible in a separable form as $\bar{f}(\bba): =\sum_{p=1}^P {\bar f}_p(\bba)$, where ${\bar f}_p(\bba)$ belongs to $\mathcal{H}_p$, for $p\in{\cal P}:=\{1, \ldots, P\}$. To add flexibility per kernel in our ensuing online MKL scheme, we let wlog $\{{\bar f}_p = {w}_p f_p\}_{p=1}^P$, and seek functions of the form 
\begin{align}\label{eq:fp}
f(v)=f(\bba): =\sum_{p=1}^P \bar{w}_p f_p(\bba)\in\bar{\cal H}
\end{align}
where $f:={\bar f}/\sum_{p=1}^P w_p$, and the normalized weights $\{\bar{w}_p:=w_p/\sum_{p=1}^P w_p\}_{p=1}^P$ satisfy $\bar{w}_p\geq 0$, and $\sum_{p=1}^P\bar{w}_p=1$. 
Exploiting separability jointly with the 
RF-based function approximation per kernel, the  MKL task can be reformulated, after letting ${\cal L}_t(\hat{f}_p^{\rm RF}(\bba_t)):={\cal L}\big(\bbtheta^\top\bbz_{\bbV_p}(\bba_t),y_t\big)$ in \eqref{eq:rf-task}, as
\begin{subequations}\label{eq:raker-task}
\begin{align}
\!&\min_{\{\bar{w}_p\}, \{\hat{f}_p^{\rm RF}\}}\, \sum_{t=1}^T\sum_{p=1}^P \bar{w}_p\, {\cal L}_t\left(\hat{f}_p^{\rm RF}(\bba_t)\right)\label{eq:raker-taska}\\
{\rm s.~to}&~~\sum_{p=1}^P\bar{w}_p=1,~\bar{w}_p\geq 0,~p\in{\cal P},\\
&~~\hat{f}_p^{\rm RF}\!\in\!\left\{\hat{f}_p(\bba_t)\!=\!\bbtheta^{\top}\bbz_{\bbV_p}(\bba_t)\right\},~p\in{\cal P}\label{eq:raker-taskc}
\end{align}
\end{subequations}
which can be solved efficiently `on-the-fly.' Relative to \eqref{opt2}, we replaced $M$ by $T$ to introduce the notion of time, and stress the fact that the nodes are sampled sequentially. 

Given the connectivity pattern  $\bba_t$ of the $t$th sampled node $v_t$, an RF vector $\bbz_p(\bba_t)$ is generated per $p$ from the pdf $\pi_{\kappa_p}(\bbv)$  via \eqref{rep:z}, where $\bbz_p(\bba_t):=\bbz_{\bbV_p}(\bba_t)$ for notational brevity. Hence, per kernel  $\kappa_p$ and node sample $t$, we have [cf. \eqref{eq:rf:fx}]
\begin{align}
  	\hat{f}_{p,t}^{\rm RF}(\bba_t)=\bbtheta_{p,t}^\top \bbz_p(\bba_t)
\end{align}
and as in \eqref{eq:weit-rf}, $\bbtheta_{p,t}$ is updated via
  \begin{align}\label{eq.klp-weight}
  	\bbtheta_{p,t+1}=\bbtheta_{p,t}-\eta \nabla{\cal L}(\bbtheta_{p,t}^\top\bbz_p(\bba_t),y_t)
  \end{align}
with $\eta\in(0,1)$ chosen constant to effect the adaptation.
As far as  $\bar{w}_{p,t}$ is concerned, since it resides on the probability simplex, a multiplicative update is well motivated as discussed also in, e.g.,~\cite{hazan2016, shen2018aistats}. For the un-normalized weights, this update is available in closed form as \cite{shen2018aistats}
\begin{align}\label{eq.mkl-weight}
w_{p,t+1}=w_{p,t}\exp\left(-\eta{\cal L}_t\left(\hat{f}_{p,t}^{\rm RF}(\bba_t)\right)\right).
\end{align}
 Having found $\{w_{p,t}\}$ as in \eqref{eq.mkl-weight}, the normalized weights in \eqref{eq:fp} are obtained as $\bar{w}_{p,t}:=w_{p,t}/\sum_{p=1}^P w_{p,t}$. 
Note from \eqref{eq.mkl-weight} that when $\hat{f}_{p,t}^{\rm RF}$ has a larger loss relative to other $\hat{f}_{p',t}^{\rm RF}$ with $p' \neq p$ for the $t$th sampled node, the corresponding $w_{p,t+1}$ decreases more than the other weights. In other words, a more accurate approximant tends to play a more important role in predicting the ensuing sampled node.
In summary, our Gradraker for online graph MKL is listed as Algorithm \ref{algo:omkl:rf}. 

{\color{black}
\noindent\textbf{Remark 4 (Comparison with batch MKL).} A batch MKL based approach for signal reconstruction over graphs was developed in \cite{romero2017kernel}. It entails an iterative algorithm whose complexity grows with $N$ in order to jointly estimate the nodal function, and to adaptively select the kernel function. When new nodes join the network, \cite{romero2017kernel} re-calculates the graphical kernels and re-solves the overall batch problem, which does not scale with the network size. In addition, \cite{romero2017kernel} is not privacy preserving in the sense that in order to estimate the function at any node, one needs to have access to the connectivity pattern of the entire network.

\noindent\textbf{Remark 5 (Comparison with $k$-NN).} An intuitive yet efficient way to predict function values of a newly joining node is to simply combine the values of its $k$ nearest neighbors ($k$-NN) \cite{altman1992introduction,chen2009fast}. Efficient as it is, $k$-NN faces  several challenges: a) At least one of the neighbors must be labeled, which does not always hold in practice, and is not required by the Gradraker; and b) $k$-NN can only account for local information, while the Gradraker takes also into account the global information of the graph.}


\begin{algorithm}[t] 
	\caption{Gradraker algorithm}\label{algo:omkl:rf}
	\begin{algorithmic}[1]
		\State\textbf{Input:}~Kernels $\kappa_p, ~p=1,\ldots, P$, step size $\eta>0$, and number of RFs $D$.
		
		
		\State\textbf{Initialization:}~$\bbtheta_{p,1}=\mathbf{0}$.
		\State\textbf{Training:}
		\For {$t = 1, 2,\ldots, T$}

		\State Obtain the adjacency vector $\bba_t$ of node $v_t$ . 
		\State Construct $\bbz_{p}(\bba_t)$ via \eqref{rep:z} using $\kappa_p$ for $p=1,\dots, P$. 
		\State Predict $\hat{f}_t^{\rm RF}(\bba_t)=\sum_{p=1}^P \bar{w}_{p,t} \hat{f}_{p,t}^{\rm RF}(\bba_t)$ 
		\State Observe loss function ${\cal L}_t$, incur ${\cal L}_t(\hat{f}_t^{\rm RF}(\bba_t))$.
		\hspace{1.cm} \For {$p=1, \ldots, P$}
		\State Obtain loss ${\cal L}(\bbtheta_{p,t}^\top\bbz_p(\bba_t),y_t)$ or ${\cal L}_t(\hat{f}_{p,t}^{\rm RF}(\bba_t))$.
		\State Update $\bbtheta_{p,t+1}$ and $w_{p,t+1}$ via \eqref{eq.klp-weight} and \eqref{eq.mkl-weight}.
		\EndFor
		\EndFor
		\State \textbf{Inference: }\\
		\hspace{6mm}Construct RF vector $\{\bbz_{p}(\bba_j)\}$  using $\{\kappa_p\}$.\\
		\hspace{6mm}Infer
		 $\hat{f}(v_j)=\sum_{p=1}^P \bar{w}_{p,T+1} \bbtheta_{p,T+1}^\top\bbz_{p}(v_j).$
		 \State \textbf{Accounting for newly-coming node}\\
		 \hspace{6mm}Construct RF vector $\{\bbz_{p}(\bba_{\rm new})\}$  using $\{\kappa_p\}$.\\
		\hspace{6mm}Estimate
		 $\hat{f}(v_{\rm new})=\sum_{p=1}^P\bar{w}_{p,T+1} \bbtheta_{p,T+1}^\top\bbz_{p}(v_{\rm new}).$\\
		 \hspace{6mm}If $y_{\rm new}$ available update $\{\bbtheta_p, w_{p}\}$ via \eqref{eq.klp-weight} and \eqref{eq.mkl-weight}.
	\end{algorithmic}
\end{algorithm}

\subsection{Generalizations}
So far, it is assumed that each node $n$ only has available its own connectivity feature vector $\bba_n$. This allows Gradraker to be applied even when limited information is available about the nodes, which many existing algorithms that rely on nodal features cannot directly cope with.

If additional feature vectors $\{ \boldsymbol{\phi}_{i,n}\}_{i=1}^I$ are actually available per node $n$ other than its own $\bba_n$, it is often not known a priori which set of features  is the most informative for estimating the signal of interest on the graph. To this end, the novel Gradraker can be adapted by treating the functions learned from different sets of features as \emph{an ensemble of learners}, and combine them in a similar fashion as in \eqref{eq:fp}, that is, 
\begin{align}
	f(v_n)=\sum_{i=1}^I \beta_i f_i(\bbphi_{i,n})\label{eq:f:fad}
\end{align}
Applications to several practical scenarios are discussed in the following.
  
\noindent\textbf{Semi-private networks.}
In practice, a node may tolerate sharing its links to its neighbors, e.g., users of Facebook may share their friends-list with friends. In this scenario, each node not only knows its own neighbors, but also has access to who are  its neighbors' neighbors, i.e., two-hop neighbors. Specifically, node $n$ has access to $\bba_n$, as well as to the $n$th column of \textcolor{black}{$\bbA^{(2)}:=\bbA\bbA$} \cite{kolaczyk2009statistical}, and a learner $f_2(\bbphi_{2,n})$ can henceforth be introduced and combined in \eqref{eq:f:fad}.  Moreover, when nodes are less strict about privacy, e.g., when a node is willing to share its multi-hop neighbors, more learners can be introduced and combined `on the fly' by selecting $\bbphi_{i,n}$ as the $n$th column of $\bbA^{(i)}$ in \eqref{eq:f:fad}.

\noindent\textbf{Multilayer networks.}
Despite their popularity, ordinary networks are often inadequate to describe increasingly complex systems. For instance, modeling interactions between two individuals using a single edge can be a gross simplification of reality. Generalizing their \emph{single-layer} counterparts, \emph{multilayer networks} allow nodes to belong to $N_g$ groups, called layers~\cite{kivela2014multilayer,traganitis2017}. These layers could represent different attributes or characteristics of a complex system, such as temporal snapshots of the same network, or different types of groups in social networks (family, soccer club, or work related). Furthermore, multilayer networks are able to model systems that typically cannot be represented by traditional graphs, such as heterogeneous information networks~\cite{zhou2007co,sun2013mining}. 
%
To this end, Gradraker  can readily incorporate the information collected from heterogenous sources, e.g., connectivity patterns $\{\bbA_i\}_{i=1}^{N_g}$ from different layers, by adopting a kernel based learner $f_i(\bba_{i,n})$ on the $i$th layer and combining them as in \eqref{eq:f:fad}.

\noindent\textbf{Nodal features available.}
In certain cases,  nodes may have nodal features \cite{kolaczyk2009statistical} in addition to their $\{\bba_n\}$. For example,  in social networks, other than the users' connectivity patterns, we may also have access to their shopping history on Amazon. In financial networks, in addition to the knowledge of trade relationships with other companies, there may be additional information available per company, e.g., the number of employees, category of products  the company sales, or the annual profit.
 Gradraker can also incorporate this information  by introducing additional learners  based on the nodal feature vectors, and combine them as in \eqref{eq:f:fad}. 


\section{Performance analysis}

To analyze the performance of the novel Gradraker algorithm, we assume that the following are satisfied.

\vspace{0.1cm}
\noindent\textbf{(as1)}
\emph{For all sampled nodes $\{v_t\}_{t=1}^T$, the loss function ${\cal L}(\bbtheta^\top\bbz_{\bbV}(\bba_t),y_t)$ in \eqref{eq:rf-task} is convex w.r.t. $\bbtheta$.}

\vspace{0.1cm}
\noindent\textbf{(as2)}
\emph{For $\bm{\theta}$ belonging to a bounded set ${\bbTheta}$ with $\|\bbtheta\|\leq C_{\theta}$, the loss is bounded; that is, ${\cal L}(\bbtheta^\top\bbz_{\bbV}(\bba_t),y_t)\in[-1,1]$, and has bounded gradient, meaning, $\|\nabla {\cal L}(\bbtheta^\top\bbz_{\bbV}(\bba_t),y_t)\|\leq L$.}

\vspace{0.1cm}
\noindent\textbf{(as3)}
\emph{The kernels $\{\kappa_p\}_{p=1}^P$ are shift-invariant, standardized, and bounded, that is, $\kappa_p(\mathbf{a}_n,\mathbf{a}_{n'})\!\leq\! 1,\,\forall \mathbf{a}_n,\mathbf{a}_{n'}$; and w.l.o.g. they also have bounded entries, meaning $\|\mathbf{a}_n\|\leq 1, \forall n$.}  
\vspace{0.1cm}

Convexity of the loss under (as1) is satisfied by the popular loss functions including the square loss and the logistic loss.
As far as (as2), it ensures that the losses, and their gradients are bounded, meaning they are $L$-Lipschitz continuous. 
While boundedness of the losses commonly holds since $\|\bbtheta\|$ is bounded, Lipschitz continuity is also not restrictive. Considering kernel-based regression as an example, the gradient is $(\bm{\theta}^{\top} \mathbf{z}_{\bbV}(\mathbf{x}_t)-y_t) \mathbf{z}_{\bbV}(\mathbf{x}_t)+\lambda\bbtheta$. Since the loss is bounded, e.g., $\|\bm{\theta}^{\top} \mathbf{z}_{\bbV}(\mathbf{x}_t)-y_t\| \leq 1$, and the RF vector in \eqref{rep:z} can be bounded as $\|\mathbf{z}_{\bbV}(\mathbf{x}_t)\|\leq 1$, the constant is $L:= 1+\lambda C_{\theta}$ using the Cauchy-Schwartz inequality. 
Kernels satisfying the conditions in (as3) include Gaussian, Laplacian, and Cauchy \cite{rahimi2007}.   
In general, (as1)-(as3) are standard in online convex optimization (OCO) \cite{shalev2011,hazan2016}, and in kernel-based learning \cite{micchelli2005,rahimi2007,lu2016large}.
  
In order to quantify the performance of Gradraker, we resort to the  static regret metric, which  quantifies the difference
between the aggregate loss of an OCO algorithm, and that of the best
fixed function approximant in hindsight, see also e.g.,~\cite{shalev2011,hazan2016}. Specifically, for a  sequence $\{\hat{f}_t\}$ obtained by an online algorithm ${\cal A}$, its static regret is
\begin{align}\label{eq.sta-reg}
    {\rm Reg}_{\cal A}^{\rm s}(T):=\sum_{t=1}^T {\cal L}_t(\hat{f}_t(\bba_t))-\sum_{t=1}^T{\cal L}_t(f^*(\bba_t))
\end{align}
where $\hat{f}_t^{\rm RF}$ will henceforth be replaced by $\hat{f}_t$  for notational brevity; and, $f^*(\cdot)$ is defined as the batch solution
\begin{align}\label{eq.slot-opt}
 f^*(\cdot)  & \in\arg\min_{\{f_p^*,\,p\in{\cal P}\}}\,\sum_{t=1}^T {\cal L}_t(f_p^*(\bba_t))\nonumber\\& ~~~{\rm with}~~~f_p^*(\cdot)\in\arg\min_{f\in{\cal F}_p} \,\sum_{t=1}^T {\cal L}_t(f(\bba_t))
\end{align}
where ${\cal F}_p:={\cal H}_p$, with ${\cal H}_p$ representing the RKHS induced by $\kappa_p$. 
We establish the  regret of our Gradraker approach in the following lemma.

 \begin{lemma}
\label{lemma4}
	Under (as1), (as2), and with $\hat{f}_p^*$
	{\color{black} defined as
	$
		\hat{f}_p^*(\cdot)\in\arg\min_{f\in
		\hat{\cal F}_p} \,\sum_{t=1}^T {\cal L}_t(f(\bba_t))
	$,
	}
	 with $\hat{\cal F}_p:=\{\hat{f}_p|\hat{f}_p(\bba)=\bbtheta^{\top}\mathbf{z}_p(\bba),\,\forall \bbtheta\in\mathbb{R}^{2D}\}$, for any $p$, the sequences $\{\hat{f}_{p,t}\}$ and $\{\bar{w}_{p,t}\}$ generated by Gradraker satisfy the following bound 
\begin{align}
	\label{eq.mkl.sreg}
&\sum_{t=1}^T{\cal L}_t\bigg(\sum_{p=1}^P \bar{w}_{p,t} \hat{f}_{p,t}(\bba_t)\bigg)-\sum_{t=1}^T{\cal L}_t(\hat{f}_p^*(\bba_t))\nonumber\\
\leq & \frac{\ln P}{\eta}+\frac{\|\bbtheta_p^*\|^2}{2\eta}+\frac{\eta L^2T}{2}+\eta T
	\end{align}
where $\bbtheta_p^*$ is associated with the best RF function approximant $\hat{f}_p^*(\bba)=\left(\bbtheta_p^*\right)^{\top}\mathbf{z}_p(\bba)$.
\end{lemma}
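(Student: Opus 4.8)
The plan is to split the left-hand side of \eqref{eq.mkl.sreg} into a ``meta-learner'' regret, incurred by the exponentiated-weights update \eqref{eq.mkl-weight} that combines the $P$ per-kernel predictors, and a per-kernel online gradient descent (OGD) regret, incurred by \eqref{eq.klp-weight}; each is then controlled by a standard online-learning estimate, and the two bounds add up to the claimed one. To begin, convexity of the per-slot loss in its prediction argument --- a consequence of (as1) --- together with $\bar w_{p,t}\ge0$ and $\sum_{p}\bar w_{p,t}=1$ gives, by Jensen's inequality, ${\cal L}_t\big(\sum_{p}\bar w_{p,t}\hat f_{p,t}(\bba_t)\big)\le\sum_{p}\bar w_{p,t}{\cal L}_t\big(\hat f_{p,t}(\bba_t)\big)$ for every $t$. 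Summing over $t$, subtracting $\sum_{t}{\cal L}_t(\hat f_p^*(\bba_t))$ for the fixed $p$ in the statement, and adding and subtracting $\sum_{t}{\cal L}_t(\hat f_{p,t}(\bba_t))$, the left-hand side of \eqref{eq.mkl.sreg} is bounded by $R_{\mathrm w}+R_{\bbtheta}$, where
\[
R_{\mathrm w}:=\sum_{t=1}^T\sum_{p'=1}^P\bar w_{p',t}{\cal L}_t\big(\hat f_{p',t}(\bba_t)\big)-\sum_{t=1}^T{\cal L}_t\big(\hat f_{p,t}(\bba_t)\big),\qquad R_{\bbtheta}:=\sum_{t=1}^T{\cal L}_t\big(\hat f_{p,t}(\bba_t)\big)-\sum_{t=1}^T{\cal L}_t\big(\hat f_p^*(\bba_t)\big).
\]

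For $R_{\mathrm w}$ (a weighted-majority/Hedge regret against expert $p$) I would introduce the potential $W_t:=\sum_{p'}w_{p',t}$. From \eqref{eq.mkl-weight}, $W_{t+1}=W_t\sum_{p'}\bar w_{p',t}e^{-\eta{\cal L}_t(\hat f_{p',t}(\bba_t))}$; since $\eta\in(0,1)$ and ${\cal L}_t\in[-1,1]$ by (as2), the elementary inequality $e^{-x}\le1-x+x^2$ for $|x|\le1$ together with ${\cal L}_t^2\le1$ and $1+u\le e^u$ yields $W_{t+1}\le W_t\exp\!\big(-\eta\sum_{p'}\bar w_{p',t}{\cal L}_t(\hat f_{p',t}(\bba_t))+\eta^2\big)$. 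Telescoping from $W_1=P$, using $W_{T+1}\ge w_{p,T+1}=\exp\!\big(-\eta\sum_t{\cal L}_t(\hat f_{p,t}(\bba_t))\big)$, taking logarithms and dividing by $\eta$, I obtain $R_{\mathrm w}\le(\ln P)/\eta+\eta T$.

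For $R_{\bbtheta}$ (the per-kernel OGD regret) I would apply the textbook descent-lemma argument: the iterates obey $\bbtheta_{p,t+1}=\bbtheta_{p,t}-\eta\nabla{\cal L}_t(\bbtheta_{p,t}^\top\bbz_p(\bba_t),y_t)$ with $\bbtheta_{p,1}=\mathbf{0}$, the loss is convex in $\bbtheta$ (as1) with $\|\nabla{\cal L}_t\|\le L$ (as2), and $\hat f_p^*(\bba)=(\bbtheta_p^*)^\top\bbz_p(\bba)$ with $\bbtheta_p^*$ free over $\mathbb{R}^{2D}$. Expanding $\|\bbtheta_{p,t+1}-\bbtheta_p^*\|^2$, using convexity to lower-bound the inner-product term by ${\cal L}_t(\hat f_{p,t}(\bba_t))-{\cal L}_t(\hat f_p^*(\bba_t))$, and summing the resulting telescoping inequality gives $R_{\bbtheta}\le\|\bbtheta_p^*\|^2/(2\eta)+\eta L^2T/2$. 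Adding the two bounds reproduces the right-hand side of \eqref{eq.mkl.sreg} verbatim.

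Both sub-bounds are routine; the steps that actually require attention are that (as1) must be invoked at the level of the scalar prediction so that Jensen applies to the combined output $\sum_p\bar w_{p,t}\hat f_{p,t}(\bba_t)$, and that it is the common choice $\eta<1$ that simultaneously validates the estimate $e^{-x}\le1-x+x^2$ used in the Hedge part and makes the OGD descent lemma yield the stated constants, so the two regrets can be summed with a single $\eta$. I expect the only genuine bookkeeping obstacle to be carrying the regularizer $\mu\Omega(\|\bbtheta\|^2)$ consistently through the Jensen step and both sub-regrets; under (as1)--(as2) it is subsumed into ${\cal L}_t$ and leaves the structure of the argument unchanged.
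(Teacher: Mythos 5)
Your proposal is correct and follows essentially the same route as the paper: the same Jensen-inequality split into a Hedge-type regret for the multiplicative weight update \eqref{eq.mkl-weight} (bounded via the potential $W_t=\sum_p w_{p,t}$, the inequalities $e^{-x}\le 1-x+x^2$ and $1+u\le e^u$, and ${\cal L}_t^2\le 1$) and a per-kernel online gradient descent regret for \eqref{eq.klp-weight} (the paper's Lemma~\ref{lemma3}, whose standard proof you spell out). The only cosmetic difference is your normalization $w_{p,1}=1$ versus the paper's $w_{p,1}=1/P$, which yields the identical $\ln P$ term.
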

\begin{proof}
	See Appendix \ref{app.pf.lemma4}
\end{proof}

In addition to bounding the regret in the RF space, the next theorem compares the Gradraker loss 
relative to that of the best functional estimator in the original RKHS.
\begin{theorem}\label{theorem0}
Under (as1)-(as3), and with $f^*$ defined as in \eqref{eq.slot-opt}, for a fixed $\epsilon>0$, the following bound holds with probability at least $1-2^8\big(\frac{\sigma_p}{\epsilon}\big)^2 \exp \big(\frac{-D\epsilon^2}{4N+8}\big)$
\begin{align}\label{eq.sreg.f}
	&\sum_{t=1}^T{\cal L}_t\left(\sum_{p=1}^P \bar{w}_{p,t} \hat{f}_{p,t}(\bba_t)\right)-\!\!\!\sum_{t=1}^T{\cal L}_t\left(f^*(\bba_t)\right)\nonumber\\
	\leq &\frac{\ln P}{\eta}+\frac{(1+\epsilon)C^2}{2\eta}\!+\!\frac{\eta L^2T}{2}+\eta T\!+\!\epsilon LTC
\end{align}
where $C$ is a constant, while $\sigma_p^2:=\mathbb{E}_{\pi_{\kappa_p}}[\|\bbv\|^2]$ is the second-order moment of the RF vector norm. Setting $\eta=\epsilon={\cal O}(1/\sqrt{T})$ in \eqref{eq.sreg.f}, the static regret in \eqref{eq.sta-reg} leads to
\begin{align}
\label{eq:sreg:11}
	 {\rm Reg}_{\rm Gradraker}^{\rm s}(T)= {\cal O}(\sqrt{T}).
\end{align}
\end{theorem}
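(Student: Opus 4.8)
The plan is to lift the regret bound of Lemma~\ref{lemma4}, which compares Gradraker to the best random-feature (RF) approximant $\hat f_p^*$ in each space $\hat{\cal F}_p$, into one that compares it to the best RKHS estimator $f^*$ of \eqref{eq.slot-opt}, by inserting an explicit RF surrogate of $f^*$ as an intermediate comparator and charging the RF approximation error to the Lipschitz continuity in (as2). The first observation is that the online-gradient-descent-plus-exponentiated-weights argument behind Lemma~\ref{lemma4} goes through verbatim with the empirical minimizer $\hat f_p^*$ replaced by \emph{any} fixed $f_p^\diamond(\bba)=(\bbtheta_p^\diamond)^{\top}\bbz_p(\bba)\in\hat{\cal F}_p$: the term $\|\bbtheta_p^*\|^2/(2\eta)$ in \eqref{eq.mkl.sreg} is merely the contribution of $\|\bbtheta_{p,1}-\bbtheta_p^*\|^2$ with the initialization $\bbtheta_{p,1}=\mathbf{0}$, so it becomes $\|\bbtheta_p^\diamond\|^2/(2\eta)$, while all other terms are comparator-free. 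Hence for every $p$ and every $\bbtheta_p^\diamond$, Gradraker obeys $\sum_{t=1}^T{\cal L}_t\!\big(\sum_{p'=1}^{P}\bar w_{p',t}\hat f_{p',t}(\bba_t)\big)-\sum_{t=1}^T{\cal L}\big((\bbtheta_p^\diamond)^{\top}\bbz_p(\bba_t),y_t\big)\le \tfrac{\ln P}{\eta}+\tfrac{\|\bbtheta_p^\diamond\|^2}{2\eta}+\tfrac{\eta L^2T}{2}+\eta T$.

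Next I would supply the probabilistic ingredient, namely the RF approximation guarantee under (as3). Fix the index $p$ for which $f^*=f_p^*$ attains the batch optimum in \eqref{eq.slot-opt}, and let $C$ bound $\|f^*\|_{{\cal H}_p}$ (finite under (as2)--(as3)). Because $\kappa_p$ is shift-invariant and standardized and the connectivity vectors satisfy $\|\bba_n\|\le 1$, the standard anchor-point/covering-number argument of~\cite{rahimi2007} over this bounded domain in $\mathbb{R}^N$ yields a choice $\bbtheta_p^\diamond$ of RF weights for which, with probability at least $1-2^8(\sigma_p/\epsilon)^2\exp\!\big(-D\epsilon^2/(4N+8)\big)$ over the draws $\{\bbv_i\}_{i=1}^D\sim\pi_{\kappa_p}$, one has simultaneously $\|\bbtheta_p^\diamond\|^2\le(1+\epsilon)C^2$ and $\sup_{\bba:\|\bba\|\le1}\big|(\bbtheta_p^\diamond)^{\top}\bbz_p(\bba)-f^*(\bba)\big|\le\epsilon C$; here the failure exponent $D\epsilon^2/(4N+8)=D\epsilon^2/(4(N+2))$ is precisely what that argument produces in ambient dimension $N$, and $\sigma_p^2=\mathbb{E}_{\pi_{\kappa_p}}[\|\bbv\|^2]$ enters through the Lipschitz constant of the RF map \eqref{rep:z} used to build the $\epsilon$-net.

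Granting this event, I would chain the pieces. By the $L$-Lipschitz property of ${\cal L}_t$ in (as2) together with the sup-norm bound, $\sum_t{\cal L}\big((\bbtheta_p^\diamond)^{\top}\bbz_p(\bba_t),y_t\big)\le\sum_t{\cal L}_t(f^*(\bba_t))+L\sum_t\big|(\bbtheta_p^\diamond)^{\top}\bbz_p(\bba_t)-f^*(\bba_t)\big|\le\sum_t{\cal L}_t(f^*(\bba_t))+\epsilon LTC$. Plugging $\|\bbtheta_p^\diamond\|^2\le(1+\epsilon)C^2$ into the generalized Lemma~\ref{lemma4} bound from the first paragraph and adding the previous display produces exactly \eqref{eq.sreg.f}. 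Finally, choosing $\eta=\epsilon=\Theta(1/\sqrt T)$ makes each of the five summands on the right of \eqref{eq.sreg.f} --- i.e.\ $\tfrac{\ln P}{\eta}$, $\tfrac{(1+\epsilon)C^2}{2\eta}$, $\tfrac{\eta L^2T}{2}$, $\eta T$, and $\epsilon LTC$ --- of order $\sqrt T$ (the constants $\ln P$, $C$, $L$ being absorbed), so that by the definition \eqref{eq.sta-reg} we obtain ${\rm Reg}_{\rm Gradraker}^{\rm s}(T)={\cal O}(\sqrt T)$, i.e.\ \eqref{eq:sreg:11}.

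The main obstacle is the second paragraph: obtaining the RF approximation of $f^*$ with the sup-norm error $\le\epsilon C$ and the weight-norm control $\|\bbtheta_p^\diamond\|^2\le(1+\epsilon)C^2$ holding on the \emph{same} high-probability event, and extracting the deviation exponent in the sharp form $D\epsilon^2/(4N+8)$ so that the guarantee degrades only linearly in the connectivity-vector dimension $N$. The remaining ingredients --- re-deriving Lemma~\ref{lemma4} against a generic comparator, the one-line Lipschitz swap from $f_p^\diamond$ to $f^*$, and balancing $\eta$ against $\epsilon$ --- are routine.
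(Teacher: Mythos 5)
Your proposal follows essentially the same route as the paper: an intermediate RF comparator for $f^*$, the observation that the Lemma~\ref{lemma4} machinery holds against any fixed $\bbtheta_p^\diamond$ with $\|\bbtheta_p^\diamond\|^2/(2\eta)$ in place of $\|\bbtheta_p^*\|^2/(2\eta)$, a Lipschitz swap costing $\epsilon LTC$, and the $\eta=\epsilon=\Theta(1/\sqrt{T})$ tuning. The one step you flag as the main obstacle --- building an RF surrogate of $f^*$ with both the sup-norm error and the weight-norm control on a single high-probability event --- is handled in the paper by a more concrete device than the covering-number construction you sketch: it writes $f_p^*(\bba)=\sum_{t=1}^T\alpha_{p,t}^*\kappa_p(\bba,\bba_t)$ in representer form, takes $\bbtheta_p^\diamond=\sum_t\alpha_{p,t}^*\bbz_p(\bba_t)$, and then derives \emph{both} $\big|\sum_t{\cal L}_t(\check f_p^*)-\sum_t{\cal L}_t(f_p^*)\big|\le\epsilon LTC$ and $\|\bbtheta_p^\diamond\|^2\le(1+\epsilon)C^2$ from the single Rahimi--Recht uniform event $\sup_{\bba,\bba'}|\bbz_p^\top(\bba)\bbz_p(\bba')-\kappa_p(\bba,\bba')|<\epsilon$, whose failure probability is exactly the stated $2^8(\sigma_p/\epsilon)^2\exp(-D\epsilon^2/(4N+8))$; accordingly $C$ is the $\ell_1$ norm of the representer coefficients, $C:=\max_p\sum_t|\alpha_{p,t}^*|$, rather than the RKHS norm of $f^*$. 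With that substitution your argument closes, so the gap you identified is one of execution rather than of architecture.
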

\begin{proof}
	See Appendix \ref{app.pf.theorem0}
\end{proof}

\begin{figure*}[t]
\centering
	\begin{minipage}[b]{.49\textwidth}
		\centering
		\includegraphics[width=9cm]{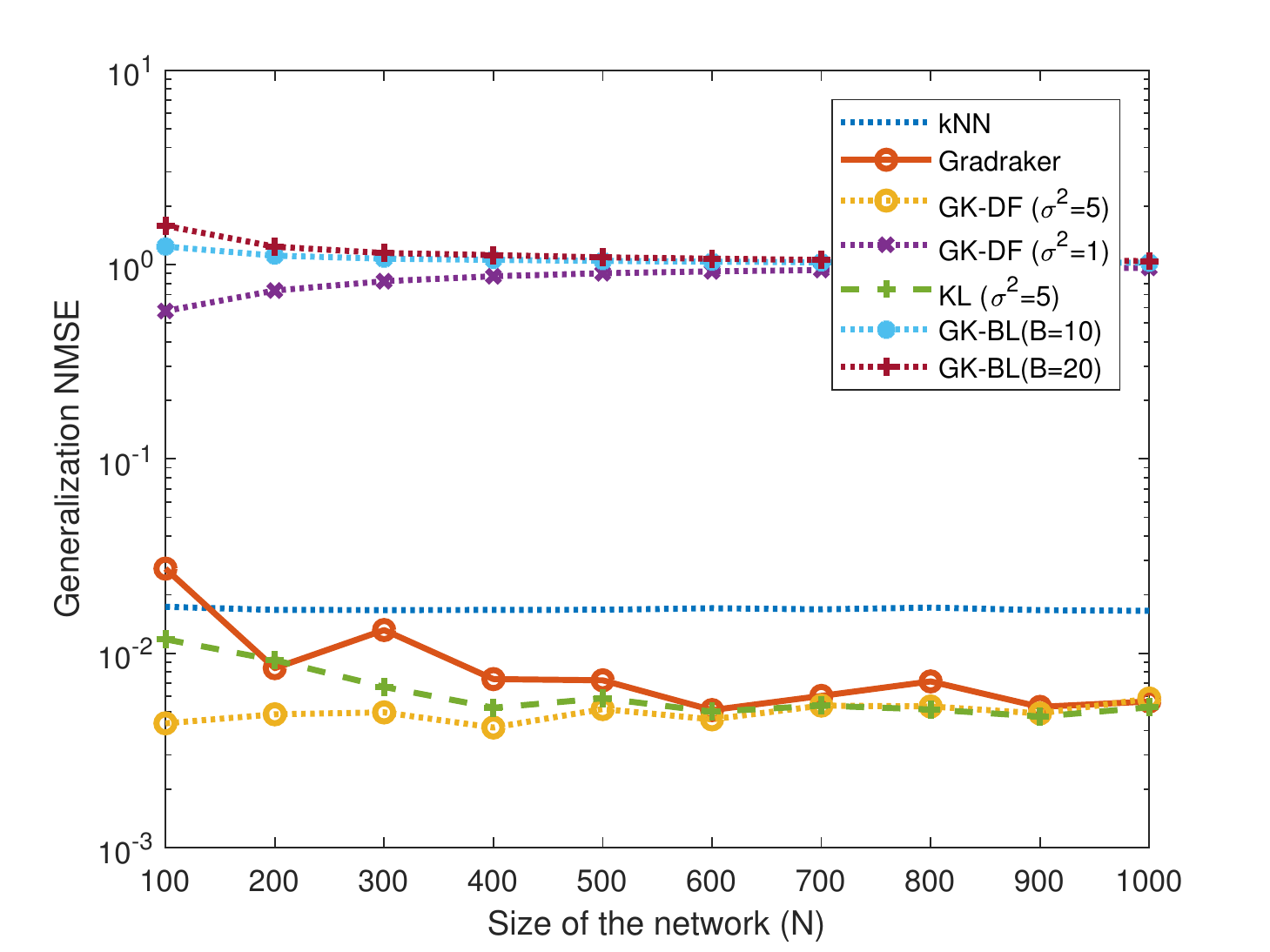}
		\centerline{(a) Generalization NMSE}
	\end{minipage}
	\begin{minipage}[b]{.49\textwidth}
		\centering
		\includegraphics[width=9cm]{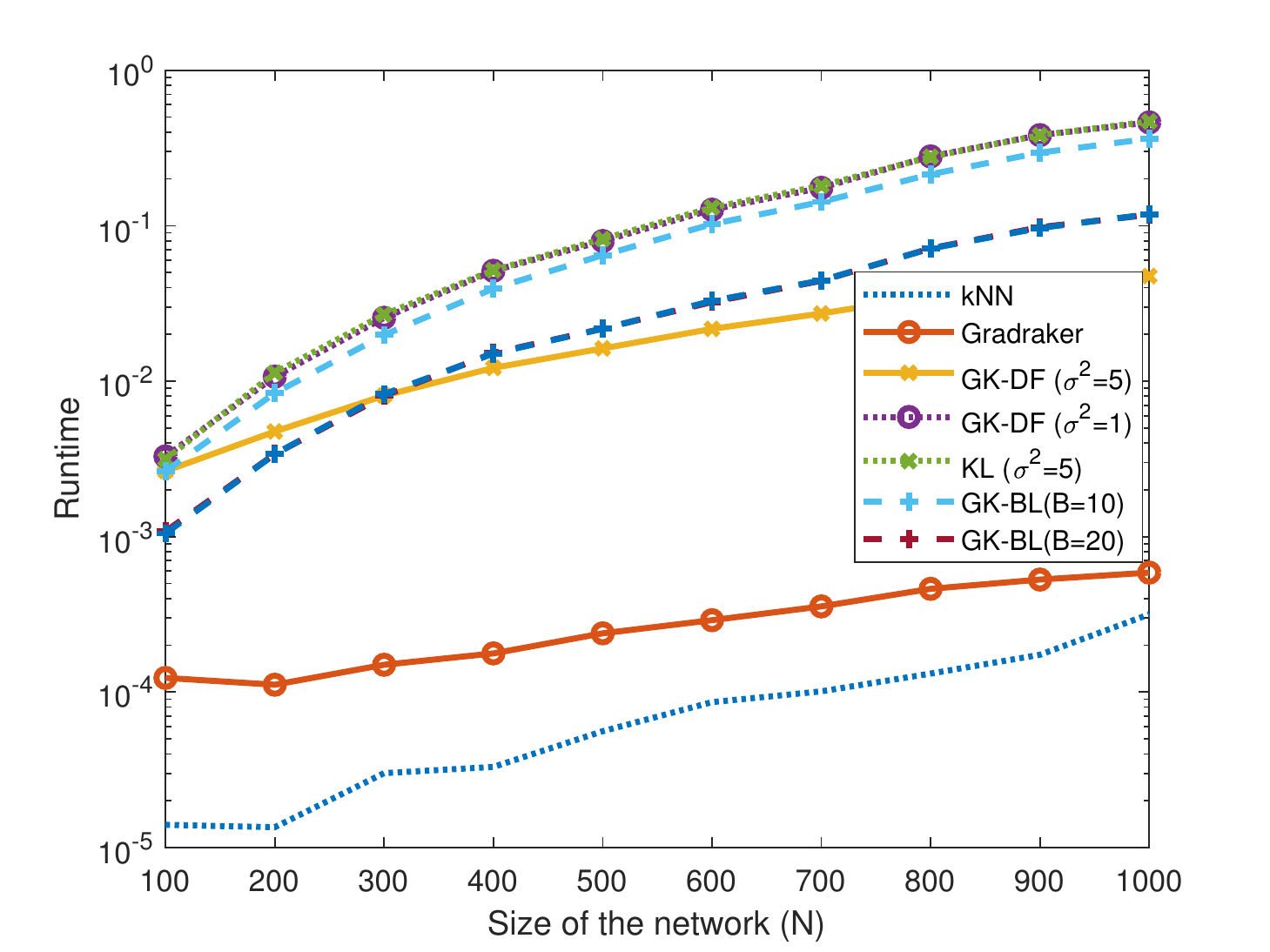}
		\centerline{(b) Testing runtime }
	\end{minipage}
	\caption{{Inference performance versus number of nodes for synthetic dataset generated from graph diffusion kernel  } }
	\label{fig1}
\end{figure*}

Observe that the probability of \eqref{eq.sreg.f} to hold grows as $D$ increases, and one can always find a $D$ to ensure a positive probability for a given $\epsilon$.  Theorem \ref{theorem0} establishes that with a proper choice of parameters, the Gradraker achieves sub-linear regret relative to the best static function approximant in \eqref{eq.slot-opt}, which means the novel Gradraker algorithm is capable of  capturing the nonlinear relationship among nodal functions accurately, as long as enough nodes are sampled sequentially. 

In addition, it is worth noting that Theorem \ref{theorem0} holds true  regardless of the sampling order of the nodes $\{v_1, \dots, v_T\}$. However, optimizing over the sampling pattern is possible, and constitutes one of  our future research directions.

\begin{figure*}[t]
\centering
	\begin{minipage}[b]{.49\textwidth}
		\centering
		\includegraphics[width=9cm]{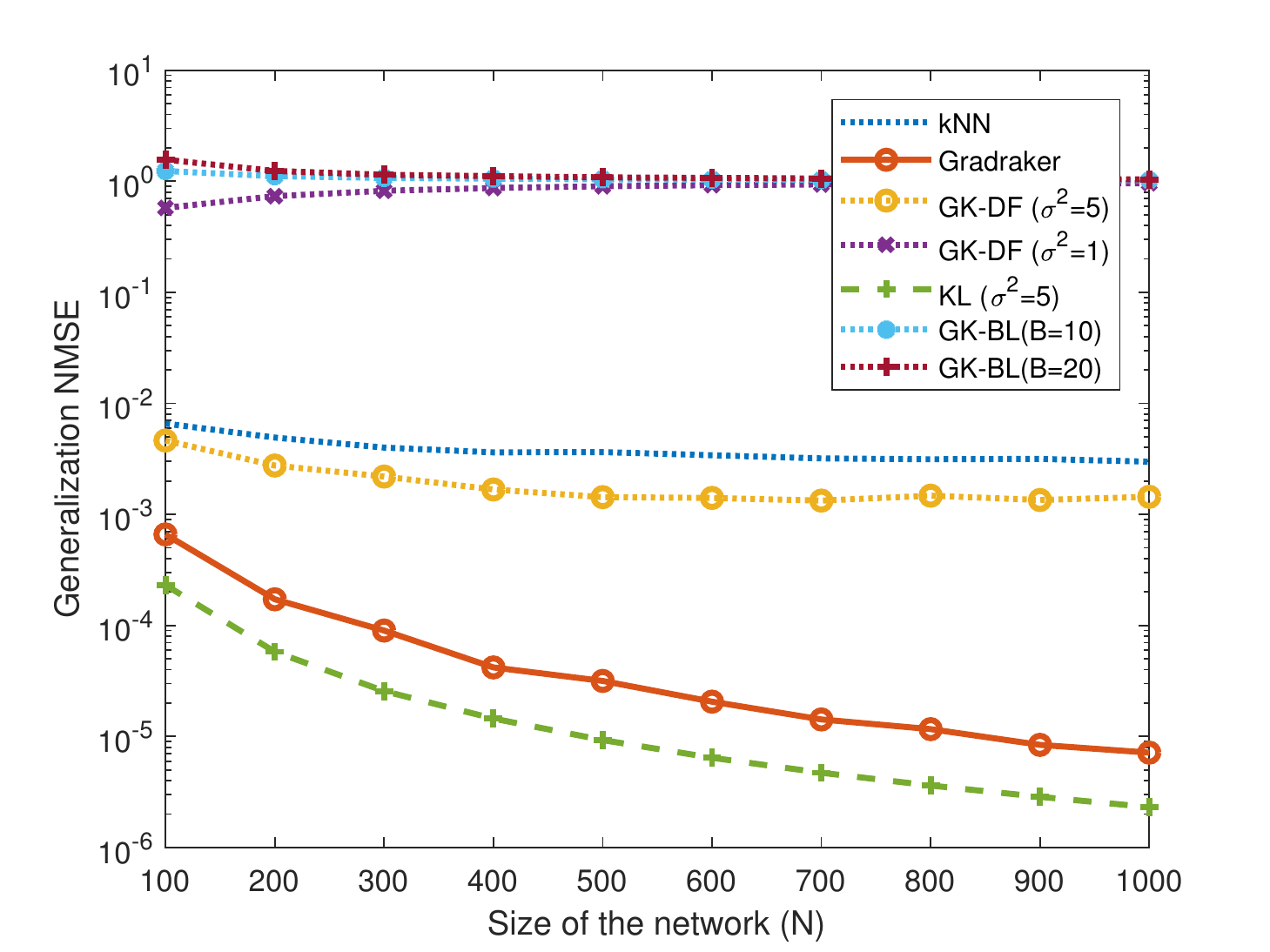}
		\centerline{(a) Generalization NMSE}
	\end{minipage}
	\begin{minipage}[b]{.49\textwidth}
		\centering
		\includegraphics[width=9cm]{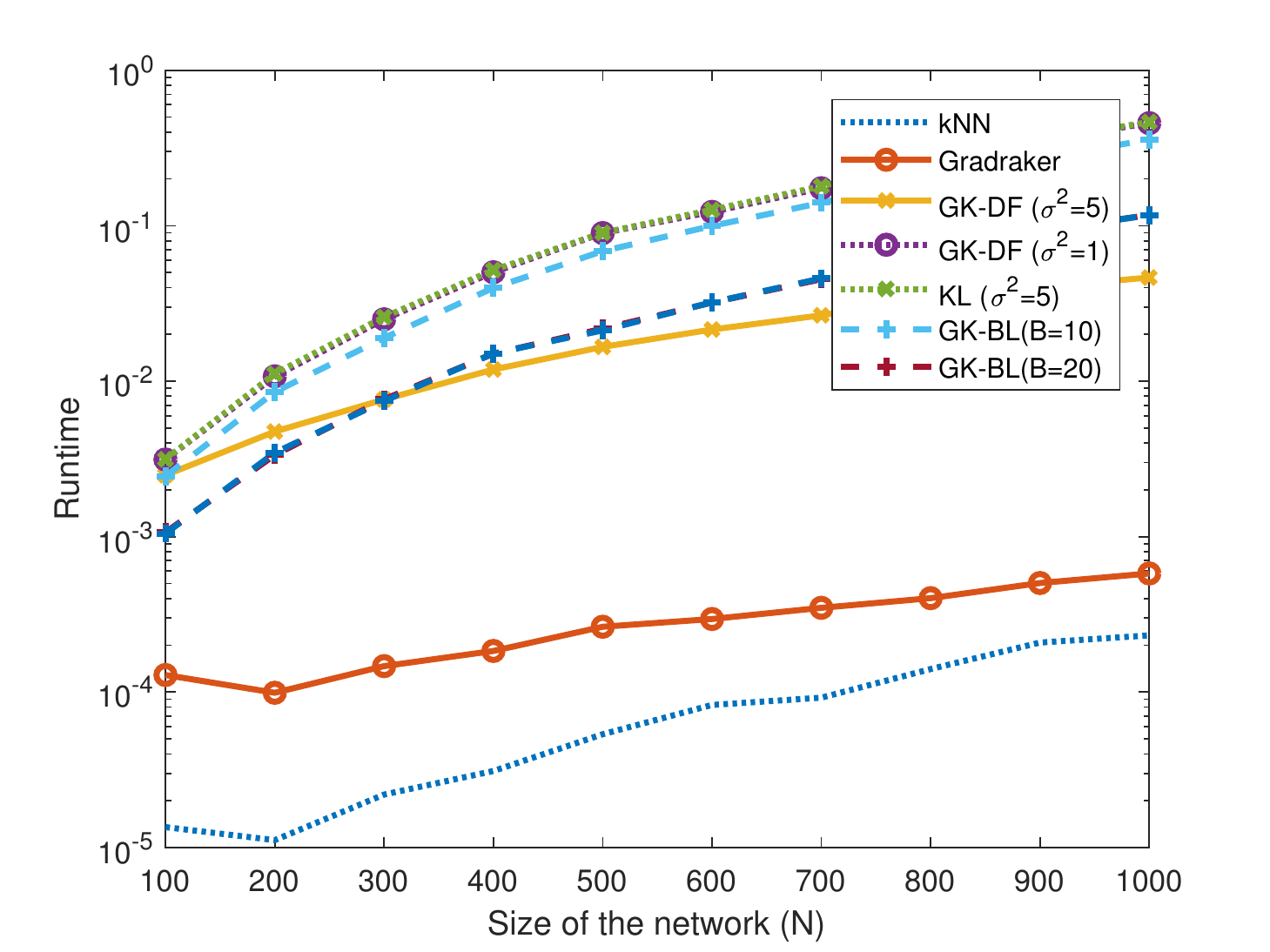}
		\centerline{(b) Runtime}
	\end{minipage}
	\caption{Inference performance versus number of nodes for synthetic dataset generated from Gaussian kernel  }
	\label{fig2}
\end{figure*}

\section{Numerical tests}
\label{sec:test}

In this section, Gradraker is tested on both synthetic and real datasets to corroborate its effectiveness. The tests will mainly focus on regression tasks for a fair comparison with existing alternatives.

\begin{figure*}[t]\label{fig:temp}
\centering
	\begin{minipage}[b]{.49\textwidth}
		\centering
		\includegraphics[width=9cm]{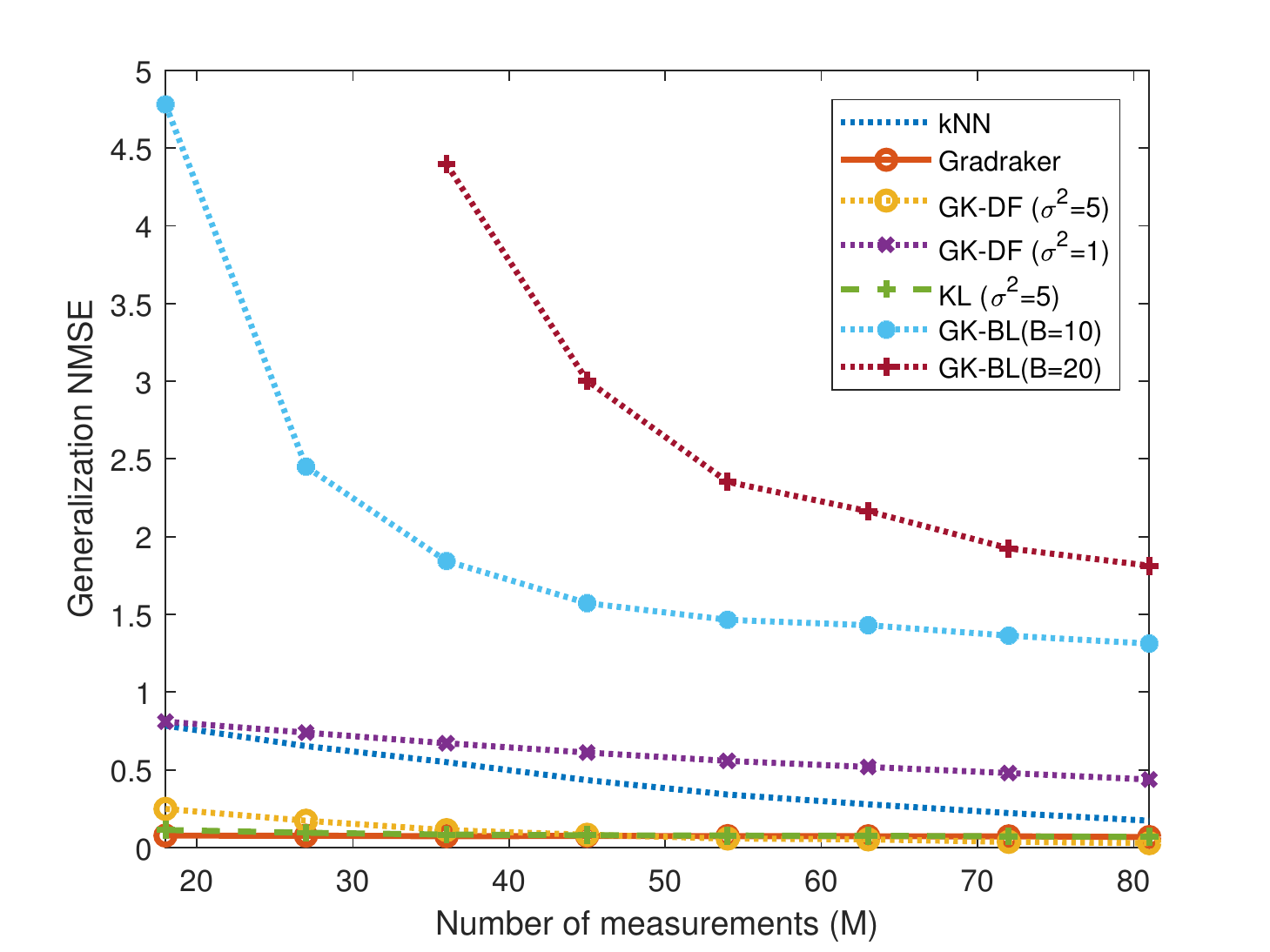}
		\centerline{(a) Generalization NMSE}
	\end{minipage}
	\begin{minipage}[b]{.49\textwidth}
		\centering
		\includegraphics[width=9cm]{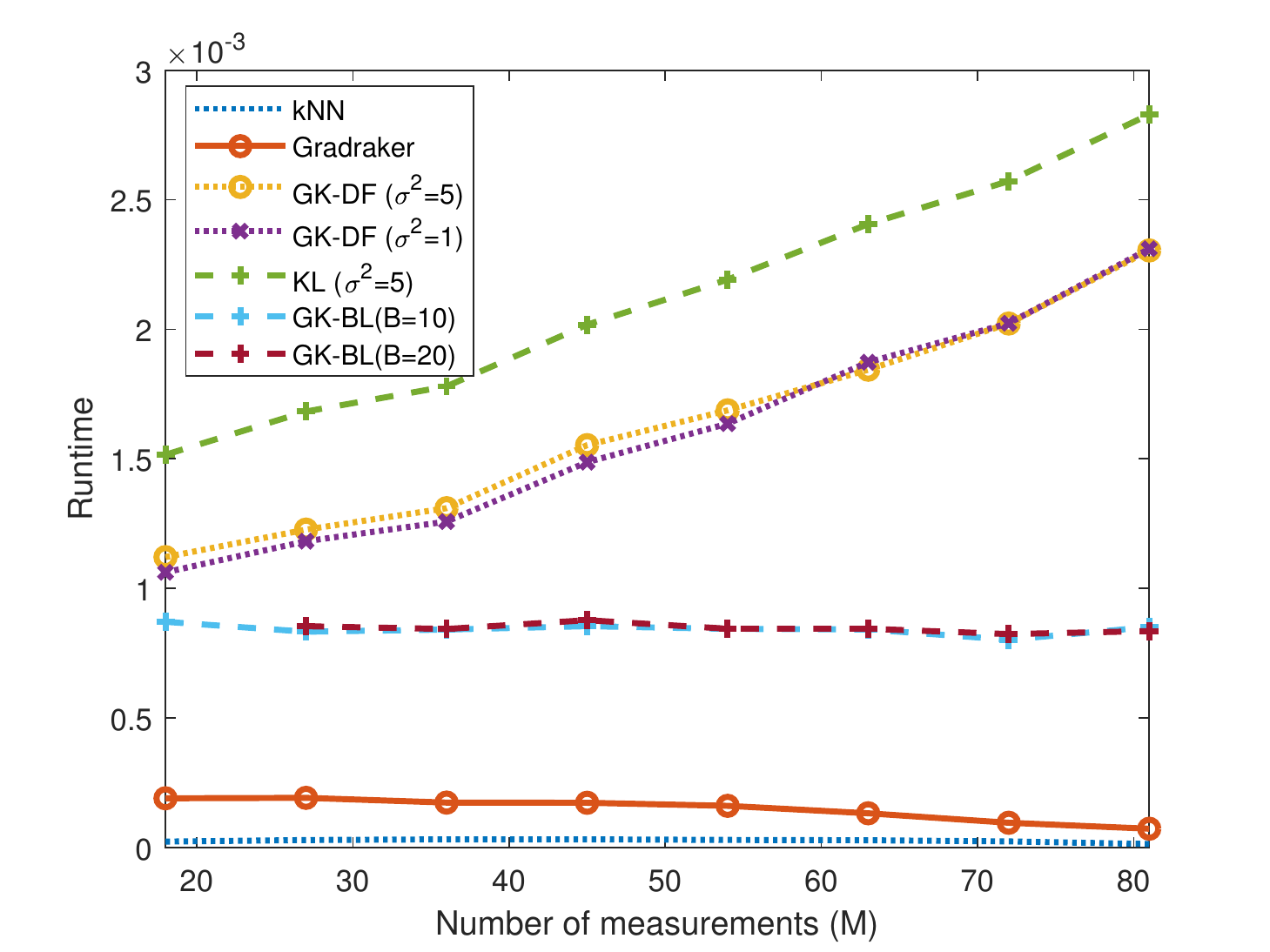}
		\centerline{(b) Runtime}
	\end{minipage}
		\caption{Inference performance versus number of sampled nodes in temperature dataset  }
	\label{fig:temp}
\end{figure*}

\subsection{Synthetic data test}

\noindent\textbf{Data generation.}
An Erd{\"o}s-R{\'e}nyi graph \cite{erdos1959random} with binary adjacency matrix  $\bbA_0\in \bbR^{N\times N}$ was generated with probability of edge presence $\pi=0.2$, and its adjacency was  symmetrized as ${\bbA}=\bbA_0+\bbA_0^\top$. This symmetrization is not required by Gradraker, but it is necessary for alternative graph kernel based methods. A function over this graph was then generated with
 each entry of the coefficient vector $\bbalpha\in\mathbb{R}^{N}$ drawn uniformly from $[0.5,1]$, and each entry of the noise $\bbe$ drawn from $\mathcal{N}(0, 0.01\bbI)$. In each experiment, the sampling matrix  $\bbPsi$ is randomly generated so that $M=0.05 N$ of the nodes are randomly sampled, and the remaining $N-M$ nodes are treated as newly-joining nodes, whose function values and connectivity patterns are both unknown at the training phase, and whose nodal function values are estimated based on their connectivity  with existing nodes in the network during the testing phase.  All algorithms are carried out on the training set of $M$ nodes, and the obtained model is  used to estimate the function value on the  newly arriving nodes. The runtime for estimating the function value on the newly-joining nodes, as well as the  generalization ${\rm NMSE}:=\frac{1}{|{\cal S}^c|}\|\hat{\bbx}_{{\cal S}^c}-\bbx_{{\cal S}^c}\|_2^2/\|\bbx_{{\cal S}^c}\|_2^2$ performance is evaluated, with ${\cal S}^c$ denoting the index set of new nodes. The Gradraker \textcolor{black}{adopts a dictionary consisting of $2$ Gaussian kernels with parameters $\sigma^2=1,5$, using $D=10$ random features, and it is compared with: a) the $k$NN algorithm, with $k$ selected as the maximum number of neighbors a node has in a specific network, and with the combining weights set to $1/k$ in unweighted graphs, and $a_{il}/\sum_{j\in\mathcal{N}_i}a_{ij}$ for the $l$th neighbor in weighted graphs;} b) the graph kernel (GK) based method using  diffusion kernels with different bandwidths (named as GK-DF), or band-limited kernels with different bandwidths (GK-BL); and c) kernel based learning without RF approximation (KL) with a Gaussian kernel of $\sigma^2=5$. Results are averaged over $100$ independent runs. The regularization parameter for all algorithms is selected from the set $\mu=\{10^{-7},10^{-6}, \dots, 10^0\}$ via cross validation.
\noindent\textbf{Testing results.} 
 Figure \ref{fig1}  illustrates the performance in terms of the average runtime and NMSE versus the number of nodes (size) of the network. 
 In this experiment, $\bar{\bbK}$ in \eqref{eq:gk} is  generated from the normalized graph Laplacian  $\bbL$,  using the diffusion kernel $r(\lambda)=\exp (\sigma^2\lambda/2)$. A bandwidth of $\sigma^2=5$ was used to generate the data.
 It is observed that GK attains the best  generalization accuracy when the ground-truth model is known, but its computational complexity grows rapidly with the network size. However, GK does not perform as well  when a mismatched kernel is applied. The Gradraker method on the other hand, is very efficient, while at the same time it can provide reasonable estimates of the signal on the newly arriving nodes, even without knowledge about the kernels. The k-NN method is very efficient, but does not provide as reliable performance as the Gradraker.
 
 Figure \ref{fig2} depicts the performance of competitive algorithms. Matrix $\bar{\bbK}$ for data generation is formed based on \eqref{eq:gka} using the Gaussian kernel $\kappa(\bba_i-\bba_j)= \exp(\|\bba_i-\bba_j\|^2/\sigma^2)$, with $\sigma^2=5$. In this case, KL exactly matches the true model, and hence it achieves the best performance. However, it is  the most complex in terms of runtime. Meanwhile, GK-based methods suffer from model mismatch, and are  also relatively more complex than Graderaker. The novel Gradraker is capable of estimating the nodal function on the newly joining nodes with high accuracy at very low computational complexity.
 Note that in real-world scenarios, accurate prior information about the underlying model is often unavailable, in which case  Gradraker can be a more reliable and efficient choice. 

\begin{figure*}[t]
\centering
		\begin{minipage}[b]{.49\textwidth}
		\centering
		\includegraphics[width=9cm]{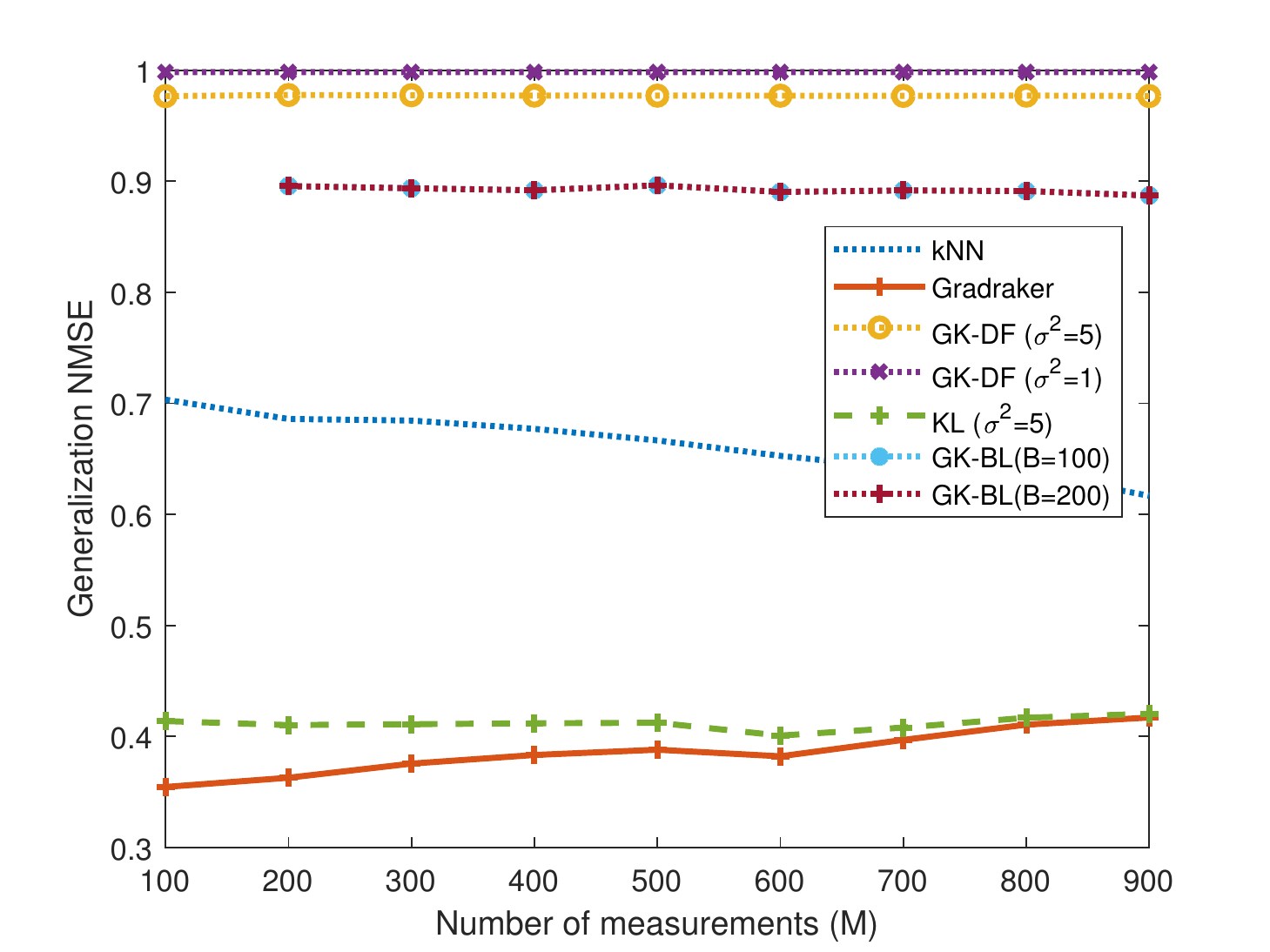}
		\centerline{(a) Generalization NMSE}
	\end{minipage}
	\begin{minipage}[b]{.49\textwidth}
		\centering
		\includegraphics[width=9cm]{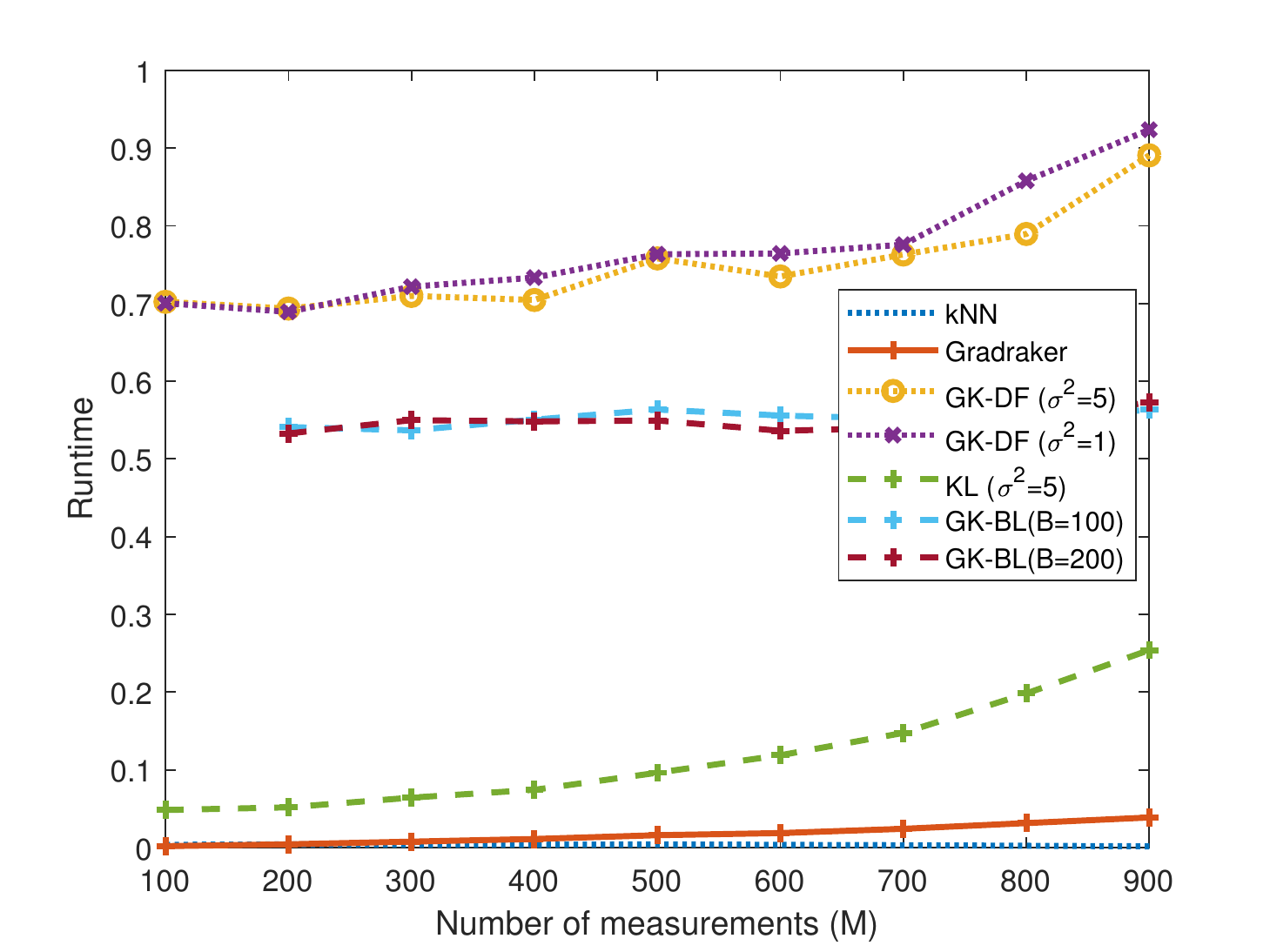}
		\centerline{(b) Runtime  }
	\end{minipage}
		\caption{{Inference performance versus number of sampled nodes in email dataset} }\label{fig:email}
\end{figure*}

\subsection{Reconstruction of the temperature data}
This subsection tests the performance of Gradraker on a real temperature dataset. The dataset comprises $24$ signals corresponding to the average 
temperature per month in the intervals $1961-1980$ and $1991-2010$ 
measured by $89$ stations in Switzerland {\cite{tempdata}}. The training set contains the first $12$ signals, corresponding to the interval $1961-1980$, while the test set contains the remaining $12$. Each station is represented by a node, and the graph was constructed using the algorithm in \cite{dong2016learning} based on the training signals. Given the test signal on a randomly chosen subset of $M$ vertices, the values at the remaining $N-M$ vertices are estimated as newly-coming nodes. \textcolor{black}{The generalization NMSE over the $N-M$ nodes is averaged across the test signals.}

 Fig. \ref{fig:temp} compares
the performance of Gradraker with those of competing alternatives. Gradraker adopts a dictionary consisting of $3$ Gaussian kernels with parameters $\sigma^2=1,5,10$, using $D=100$ random features.  
It is clear from Fig. \ref{fig:temp} that Gradraker outperforms GK in both generalization NMSE and runtime. On the other hand, even though KL achieves  lower generalization NMSE, it incurs a much higher complexity.

\begin{figure*}[t]
\centering
		\begin{minipage}[b]{.49\textwidth}
		\centering
		\includegraphics[width=9cm]{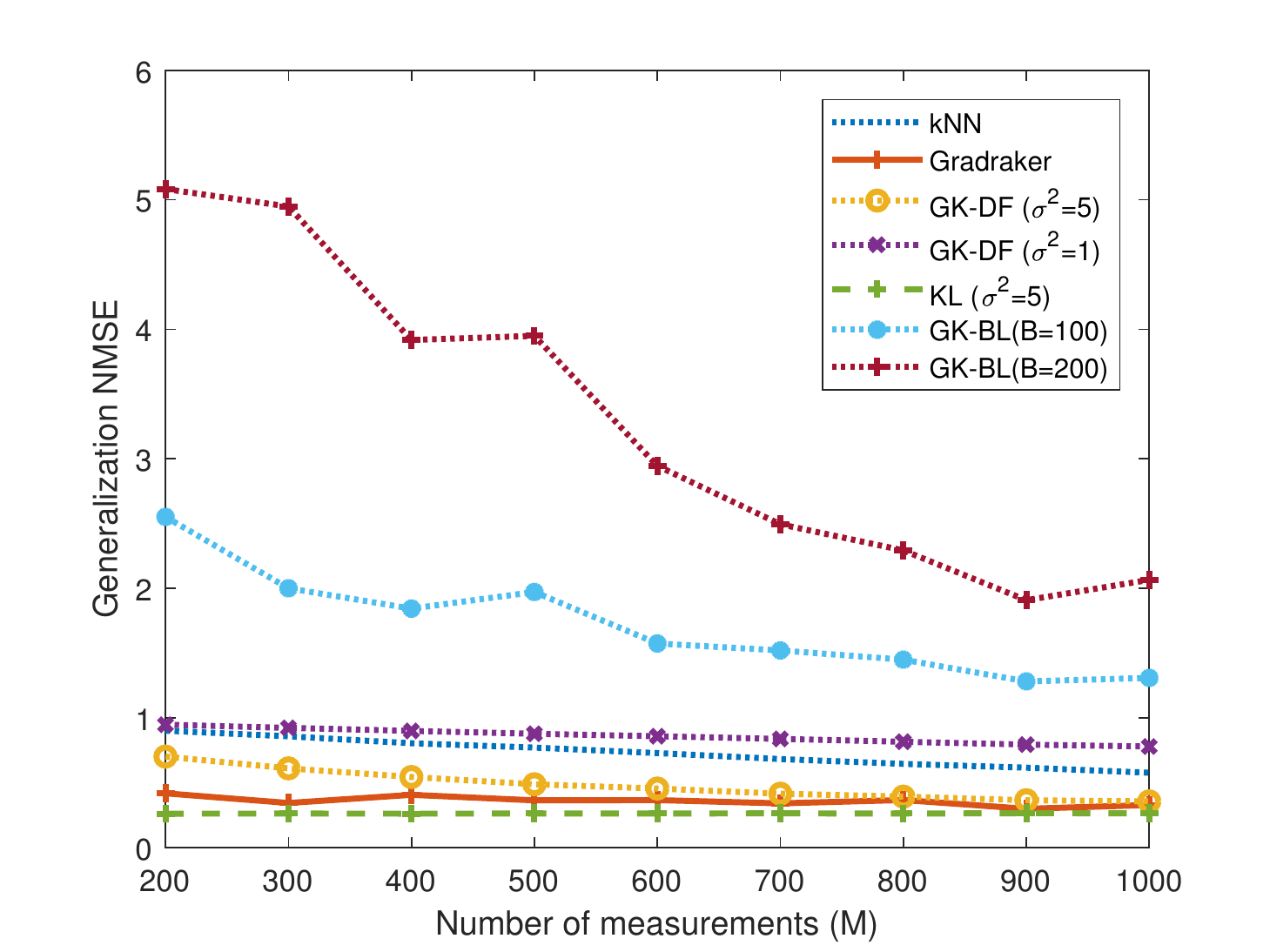}
		\centerline{(a) Generalization NMSE}
	\end{minipage}
	\begin{minipage}[b]{.49\textwidth}
		\centering
		\includegraphics[width=9cm]{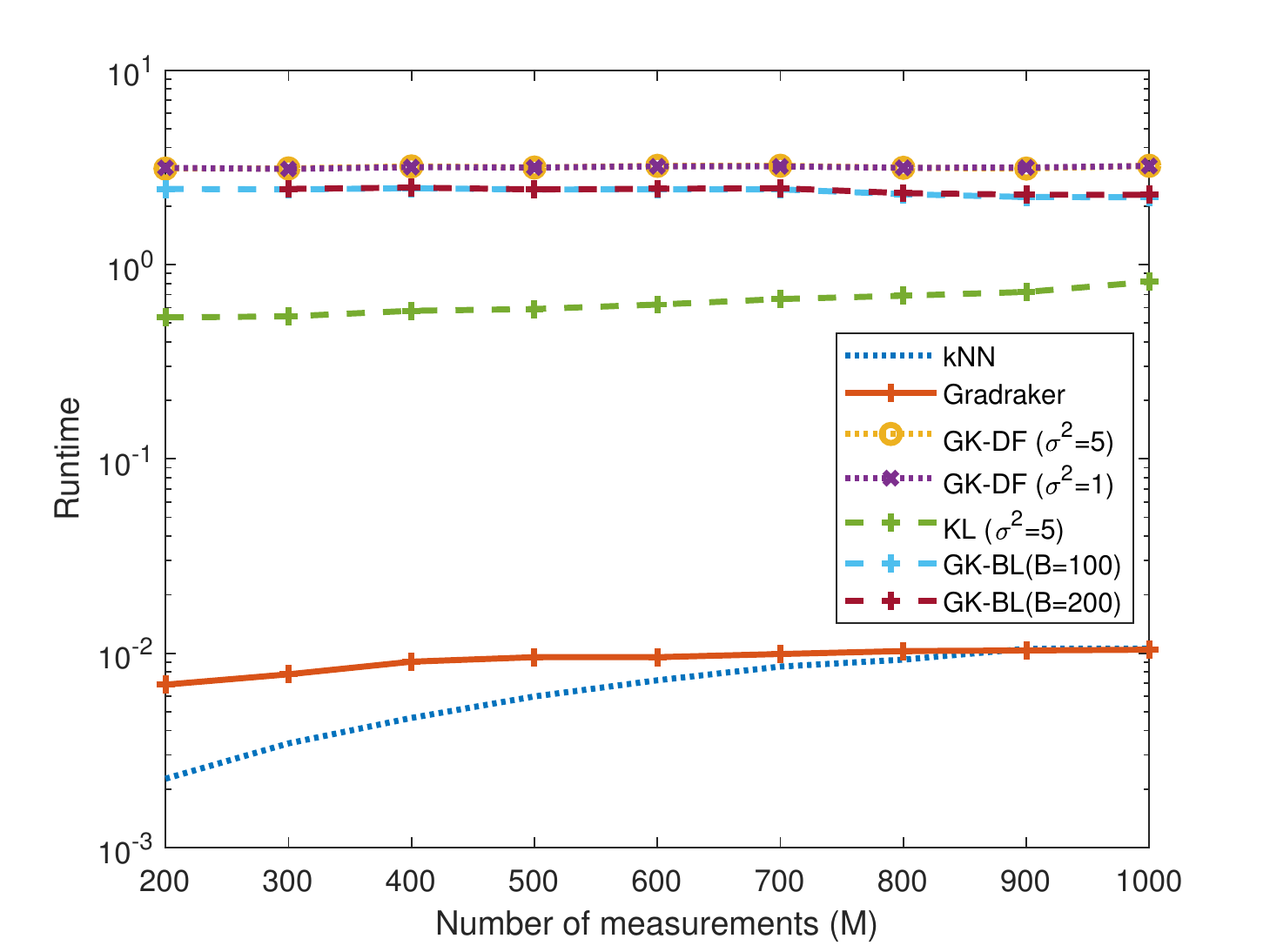}
		\centerline{(b)  Runtime  }
	\end{minipage}
		\caption{{Inference performance versus number of sampled nodes in Cora dataset} }\label{fig:cora}
\end{figure*}

\subsection{Reconstruction of the Email-Eu-core data} 
 The Eu-core network was generated using email data from a large European research institution \cite{leskovec2007tkdd}, where each node represents a person, and  an edge $(i,j)$ is present if person $i$ sent person $j$ at least one email. The e-mails only represent communication between institution members (the core), and the dataset does not contain incoming messages from or outgoing messages to the rest of the world. The dataset also contains ``ground-truth'' community memberships of the nodes. Each individual belongs to one of 42 departments at the research institute. During the experiment, the department labels are considered to be $y_n$ that are to be sampled and estimated. The graph consists of $N=1,005$ nodes, and $25,571$ edges. Gradraker adopts a dictionary consisting of $2$ Gaussian kernels with parameters $\sigma^2=1,10$, from which  $D=10$ random features are generated. The test results were averaged over $100$ independent runs with randomly sampled nodes. 


  Fig. \ref{fig:email} compares
the performance of Gradraker with those of alternative algorithms when different numbers of nodal labels are observed. 
%
 It is clear that the RF-based approach outperforms the GK-based method in both reconstruction accuracy and runtime. While the batch KL method without RF approximation outperforms the RF method, it incurs considerably higher computational complexity.
 
\subsection{Reconstruction of the Cora data}
This subsection tests the Gradraker algorithm on the Cora citation dataset \cite{lu2003link}. Gradraker adopts a dictionary consisting of $2$ Gaussian kernels with parameters $\sigma^2=1,10$, using $D=20$ random features. The results were averaged over $100$ independent runs.
The Cora dataset consists of $2,708$ scientific publications classified into one of seven classes. The citation network consists of $5,429$ links. The network is constructed so that  a link connects node $i$ to node $j$ if paper $i$ cites paper $j$, and the category id the paper belongs to is to be reconstructed. It can be observed again from Figure \ref{fig:cora}, that the Gradraker markedly outperforms the GK algorithms in terms of generalization NMSE, and is much more computationally efficient than all other algorithms except the kNN method, which however does not perform as well. 

It can be readily observed from our numerical results over synthetic and real datasets,  that the Gradraker provides reliable performance in terms of NMSE in all tests, while at the same time, it scales much better than all kernel based alternatives. This is because the  alternative kernel-based algorithms require re-computing the kernel matrix whenever a new node joins the network. It is worth noting that all kernel-based alternatives require exact knowledge of the entire network topology, which is not necessary for GradRaker that only requires $\{\bbz_{\bbV}(\bba_n)\}$. These tests corroborate the potential of GradRaker for application settings, where the graphs grow and nodes have privacy constraints.

\section{Conclusions}\label{sec:con}
The present paper deals with the problem of reconstructing signals over graphs, from samples over a subset of nodes. An online MKL based algorithm is developed, which is capable of estimating and updating the nodal functions even when samples are collected sequentially. The novel online scheme is highly scalable and can estimate the unknown signals on newly joining nodes. Unlike many existing approaches, it only relies on encrypted nodal connectivity information, which is appealing for networks where nodes have strict privacy constraints. 

This work opens up a number of interesting directions for future research, including:  a) exploring distributed
implementations that are well motivated in large-scale networks; b) graph-adaptive learning  when multiple sets of features are available; and c) developing adaptive sampling strategies for Gradraker.

%


\appendices

\section{Proof of Lemma \ref{lemma4}}\label{app.pf.lemma4}
To prove Lemma \ref{lemma4}, we introduce two intermediate lemmata.  
\begin{lemma}\label{lemma3}
Under (as1), (as2), and $\hat{f}_p^*$ as in \eqref{eq.slot-opt} with ${\cal F}_p:=\{\hat{f}_p|\hat{f}_p(\bba)=\bbtheta^{\top}\mathbf{z}_p(\bba),\,\forall \bbtheta\in\mathbb{R}^{2D}\}$, let $\{\hat{f}_{p,t}(\bba_t)\}$ denote the sequence of estimates generated by Gradraker with a pre-selected kernel $\kappa_p$. Then the following bound holds true w.p.1
	\begin{align}
	\sum_{t=1}^T {\cal L}_t(\hat{f}_{p,t}(\bba_t))\!-\!\sum_{t=1}^T{\cal L}_t(\hat{f}_p^*(\bba_t))\!\leq\! \frac{\|\bbtheta_p^*\|^2}{2\eta}\!+\!\frac{\eta L^2T}{2}
	\end{align}
	where $\eta$ is the learning rate, $L$ is the Lipschitz constant in (as2), and $\bbtheta_p^*$ is the corresponding parameter (or weight) vector supporting the best estimator  $\hat{f}_p^*(\bba)=(\bbtheta_p^*)^{\top}\mathbf{z}_p(\bba)$.
\end{lemma}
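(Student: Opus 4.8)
The plan is to recognize the update \eqref{eq.klp-weight} as online gradient descent applied to the sequence of per-slot losses $h_t(\bbtheta):={\cal L}(\bbtheta^\top\mathbf{z}_p(\bba_t),y_t)$, which are convex in $\bbtheta$ by (as1), and then to run the textbook OGD regret argument. First I would fix the kernel index $p$, abbreviate $\bbtheta_t:=\bbtheta_{p,t}$ and $g_t:=\nabla{\cal L}(\bbtheta_t^\top\mathbf{z}_p(\bba_t),y_t)$, so that $\bbtheta_{t+1}=\bbtheta_t-\eta g_t$, and expand the squared distance to the comparator $\bbtheta_p^*$:
\[
\|\bbtheta_{t+1}-\bbtheta_p^*\|^2=\|\bbtheta_t-\bbtheta_p^*\|^2-2\eta\,g_t^\top(\bbtheta_t-\bbtheta_p^*)+\eta^2\|g_t\|^2 .
\]
Rearranging gives $g_t^\top(\bbtheta_t-\bbtheta_p^*)=\frac{1}{2\eta}\big(\|\bbtheta_t-\bbtheta_p^*\|^2-\|\bbtheta_{t+1}-\bbtheta_p^*\|^2\big)+\frac{\eta}{2}\|g_t\|^2$.

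Next I would invoke convexity of $h_t$ from (as1): the first-order inequality yields ${\cal L}_t(\hat{f}_{p,t}(\bba_t))-{\cal L}_t(\hat{f}_p^*(\bba_t))=h_t(\bbtheta_t)-h_t(\bbtheta_p^*)\leq g_t^\top(\bbtheta_t-\bbtheta_p^*)$. Summing over $t=1,\dots,T$ and substituting the identity above, the distance terms telescope, leaving
\[
\sum_{t=1}^T\!\big[{\cal L}_t(\hat{f}_{p,t}(\bba_t))-{\cal L}_t(\hat{f}_p^*(\bba_t))\big]\leq\frac{\|\bbtheta_1-\bbtheta_p^*\|^2-\|\bbtheta_{T+1}-\bbtheta_p^*\|^2}{2\eta}+\frac{\eta}{2}\sum_{t=1}^T\|g_t\|^2 .
\]
Finally I would drop the nonpositive term $-\|\bbtheta_{T+1}-\bbtheta_p^*\|^2$, use the initialization $\bbtheta_1=\bbtheta_{p,1}=\mathbf{0}$ from Algorithm \ref{algo:omkl:rf} so that $\|\bbtheta_1-\bbtheta_p^*\|^2=\|\bbtheta_p^*\|^2$, and bound $\|g_t\|\leq L$ via (as2); this delivers exactly $\frac{\|\bbtheta_p^*\|^2}{2\eta}+\frac{\eta L^2T}{2}$.

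This is the classical OGD regret bound, so there is no genuine obstacle; the only two points requiring care are (i) noting that $g_t$ in \eqref{eq.klp-weight} is already the gradient of $h_t$ with respect to $\bbtheta$ — the chain-rule factor $\mathbf{z}_p(\bba_t)$ is absorbed into it — so that the convexity inequality from (as1) applies verbatim, and (ii) being explicit that the algorithm starts from $\bbtheta_{p,1}=\mathbf{0}$, which is what turns the initial distance term into $\|\bbtheta_p^*\|^2$ rather than a generic domain-diameter constant. Everything else is the routine telescoping computation above.
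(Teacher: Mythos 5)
Your proof is correct and is precisely the argument the paper intends: the paper's own "proof" of this lemma simply defers to the standard online-gradient-descent regret analysis, which is exactly the telescoping computation you carry out (convexity from (as1), bounded gradients from (as2), and the initialization $\bbtheta_{p,1}=\mathbf{0}$ turning the initial distance into $\|\bbtheta_p^*\|^2$). No gaps; your two cautionary remarks are the right ones to make explicit.
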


\begin{proof}
The proof is similar to the regret analysis of online gradient descent, see e.g., \cite{shen2018aistats}.
\end{proof}

In addition, we will bound the difference between the loss of the solution obtained from Algorithm \ref{algo:omkl:rf} and the loss of the best single kernel-based online learning algorithm. Specifically, the following lemma holds.
\begin{lemma}
\label{lemma10}
Under (as1) and (as2), with $\{\hat{f}_{p,t}\}$ generated from Gradraker, it holds that 
	\begin{equation}
	\label{eq:lemm10}
		\sum_{t=1}^T \sum_{p=1}^P \bar{w}_{p,t} {\cal L}_{t}(\hat{f}_{p,t}(\bba_t))- \sum_{t=1}^T {\cal L}_{t}(\hat{f}_{p,t}(\bba_t))\leq\eta T+\frac{\ln P}{\eta}
	\end{equation}
	where $\eta$ is the learning rate in \eqref{eq.mkl-weight}, and $P$ is the number of kernels in the dictionary. 
\end{lemma}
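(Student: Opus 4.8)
The plan is to recognize that the normalized weights $\{\bar{w}_{p,t}\}$ are exactly the iterates of the exponentiated-gradient / Hedge algorithm run on the ``expert'' losses $\ell_{p,t}:={\cal L}_t(\hat{f}_{p,t}(\bba_t))$, and then to invoke the standard Hedge regret bound. First I would observe that \eqref{eq.mkl-weight} together with the normalization $\bar{w}_{p,t}=w_{p,t}/\sum_{q=1}^P w_{q,t}$ is precisely the multiplicative-weights update with step size $\eta$ and loss vector $(\ell_{1,t},\dots,\ell_{P,t})$; note that by (as2) each $\ell_{p,t}\in[-1,1]$, so the losses are bounded as the Hedge analysis requires. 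Since the left-hand side of \eqref{eq:lemm10} is $\sum_{t=1}^T\big(\sum_{p=1}^P\bar{w}_{p,t}\ell_{p,t}-\ell_{p,t}\big)$, which is the regret of Hedge against the fixed expert $p$, it suffices to show this regret is at most $\eta T+\frac{\ln P}{\eta}$.

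The key steps, in order, are the usual potential-function argument. Define $W_t:=\sum_{q=1}^P w_{q,t}$. I would (i) lower-bound $W_{T+1}\ge w_{p,T+1}=\exp\!\big(-\eta\sum_{t=1}^T\ell_{p,t}\big)$, using $w_{p,1}=1$ (here one uses $\bbtheta_{p,1}=\mathbf 0$, hence the initial weights are all $1$ in the un-normalized form); (ii) upper-bound the ratio $W_{t+1}/W_t=\sum_{q}\bar{w}_{q,t}\exp(-\eta\ell_{q,t})$ by applying the scalar inequality $e^{-\eta x}\le 1-\eta x+\eta^2$ valid for $|x|\le 1$ (which holds since $|\ell_{q,t}|\le1$ and $\eta\in(0,1)$), followed by $1-\eta\sum_q\bar{w}_{q,t}\ell_{q,t}+\eta^2\le \exp\!\big(-\eta\sum_q\bar{w}_{q,t}\ell_{q,t}+\eta^2\big)$ via $1+u\le e^{u}$; (iii) telescope over $t=1,\dots,T$ to get $\ln W_{T+1}\le \ln P-\eta\sum_{t=1}^T\sum_q\bar{w}_{q,t}\ell_{q,t}+\eta^2 T$; and (iv) combine with the lower bound from (i) and rearrange to obtain
\begin{equation}
\sum_{t=1}^T\sum_{q=1}^P\bar{w}_{q,t}\ell_{q,t}-\sum_{t=1}^T\ell_{p,t}\le \frac{\ln P}{\eta}+\eta T,
\end{equation}
which is exactly \eqref{eq:lemm10}. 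Substituting back $\ell_{p,t}={\cal L}_t(\hat{f}_{p,t}(\bba_t))$ finishes the argument, and since $p$ was arbitrary the bound holds for every kernel index.

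I do not anticipate a serious obstacle here: the only mild subtlety is the choice of the quadratic surrogate $e^{-\eta x}\le 1-\eta x+\eta^2$ on $[-1,1]$ — one wants a clean constant so that the $\eta^2$-term aggregates to exactly $\eta T$ rather than a constant times $\eta T$. (A looser bound such as $e^{-\eta x}\le 1-\eta x+\eta^2 x^2$ would give $\eta^2\sum_t\sum_q\bar w_{q,t}\ell_{q,t}^2\le \eta^2 T$, the same thing.) One must also be careful that the losses here may be negative, so Hoeffding's lemma must be invoked in the two-sided form on $[-1,1]$; this is standard and does not change the constants. Everything else is bookkeeping, so I would keep the write-up short and point the reader to the analogous derivation in \cite{shen2018aistats, hazan2016}.
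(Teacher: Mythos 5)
Your proposal is correct and follows essentially the same route as the paper's proof: the same potential-function (multiplicative-weights/Hedge) argument with $W_t=\sum_{p}w_{p,t}$, a quadratic surrogate for $e^{-\eta x}$ on $[-1,1]$, the bound $1+u\le e^{u}$, telescoping, and the lower bound $W_{T+1}\ge w_{p,T+1}$. The only cosmetic differences are your initialization convention $w_{p,1}=1$ versus the paper's $w_{p,1}=1/P$ (both produce the same $\tfrac{\ln P}{\eta}$ term) and your use of $e^{-\eta x}\le 1-\eta x+\eta^2$ in place of the paper's $e^{-\eta x}\le 1-\eta x+\eta^2x^2$ followed by ${\cal L}_t(\cdot)^2\le 1$, which are equivalent here.
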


\begin{proof}
Letting $W_{t}:=\sum_{p=1}^P w_{p,t}$, the weight recursion in \eqref{eq.mkl-weight} implies that
\begin{eqnarray}
\label{eq:sreg:W1}
	W_{t+1}&=&\!\!\!\sum_{p=1}^P w_{p,t+1}=\sum_{p=1}^P w_{p,t} \exp\left(-\eta{\cal L}_t\left(\hat{f}_{p,t}(\bba_t)\right)\right)\\
	&\leq&\!\!\!\sum_{p=1}^P w_{p,t}\left(1-\eta{\cal L}_t\left(\hat{f}_{p,t}(\bba_t)\right)+\eta^2{\cal L}_t\left(\hat{f}_{p,t}(\bba_t)\right)^2\right)\nonumber
\end{eqnarray}
where the last inequality holds because $\exp(-\eta x)\leq 1-\eta x+\eta^2 x^2$, for $|\eta|\leq 1$.
Furthermore, substituting $\bar{w}_{p,t}:=w_{p,t}/\sum_{p=1}^P w_{p,t}=w_{p,t}/W_t$ into \eqref{eq:sreg:W1} leads to
\begin{align}
	\label{eq:sreg:W2-0}
	W_{t+1}&\leq \sum_{p=1}^P W_t\bar{w}_{p,t}\!\left(\!1-\eta{\cal L}_t\left(\hat{f}_{p,t}(\bba_t)\right)\!+\!\eta^2{\cal L}_t\left(\hat{f}_{p,t}(\bba_t)\right)^2\!\right)\nonumber\\
	&= W_t\Bigg(1-\eta\sum_{p=1}^P\bar{w}_{p,t} {\cal L}_t\left(\hat{f}_{p,t}(\bba_t)\right)\nonumber\\
	&\hspace{2cm}+\eta^2\sum_{p=1}^P\bar{w}_{p,t} {\cal L}_t\left(\hat{f}_{p,t}(\bba_t)\right)^2\Bigg).
\end{align}
Since $1+x\leq e^x,\,\forall x$, it follows that
\begin{align}
	\label{eq:sreg:W2}
	W_{t+1}\leq& W_t \exp \Bigg(-\eta \sum_{p=1}^P\bar{w}_{p,t} {\cal L}_t\left(\hat{f}_{p,t}(\bba_t)\right)\nonumber\\
	&+\eta^2 \sum_{p=1}^P\bar{w}_{p,t} {\cal L}_t\left(\hat{f}_{p,t}(\bba_t)\right)^2\Bigg).
\end{align}
Telescoping \eqref{eq:sreg:W2} from $t=1$ to $T$ yields
\begin{align}\label{eq:sreg:W3}
	W_{T+1}\leq& \exp \Bigg(-\eta \sum_{t=1}^T\sum_{p=1}^P\bar{w}_{p,t} {\cal L}_t\left(\hat{f}_{p,t}(\bba_t)\right)\nonumber\\
	&+\eta^2  \sum_{t=1}^T\sum_{p=1}^P\bar{w}_{p,t} {\cal L}_t\left(\hat{f}_{p,t}(\bba_t)\right)^2\Bigg).
\end{align}

On the other hand, for any $p$, it holds that 
\begin{eqnarray}
\label{eq:sreg:W4}
	W_{T+1}&\geq & w_{p,T+1}\nonumber\\
	&=&w_{p,1}\prod_{t=1}^T \exp(-\eta{\cal L}_t\left(\hat{f}_{p,t}(\bba_t)\right))\nonumber\\
	&= & w_{p,1}\exp\Bigg(-\eta\sum_{t=1}^T{\cal L}_t\left(\hat{f}_{p,t}(\bba_t)\right)\Bigg).
\end{eqnarray}
Combining \eqref{eq:sreg:W3} with \eqref{eq:sreg:W4}, we arrive at
\begin{align}
\label{eq:sreg:6} 
&\exp \Bigg(\!-\!\eta \sum_{t=1}^T\sum_{p=1}^P\bar{w}_{p,t} {\cal L}_t\left(\hat{f}_{p,t}(\bba_t)\right)\nonumber\\
&\hspace{2cm}+\eta^2  \sum_{t=1}^T\sum_{p=1}^P\bar{w}_{p,t} {\cal L}_t\left(\hat{f}_{p,t}(\bba_t)\right)^2\!\Bigg)\nonumber\\
&	\geq\,  w_{p,1} \exp\Bigg(\!-\!\eta\sum_{t=1}^T{\cal L}_t\left(\hat{f}_{p,t}(\bba_t)\right)\!\Bigg).
\end{align}
Taking the logarithm on both sides of \eqref{eq:sreg:6}, and recalling  that $w_{p,1}=1/P$, we obtain
\begin{align}
\label{eq:sreg:7}
&-\eta \sum_{t=1}^T\sum_{p=1}^P\bar{w}_{p,t} {\cal L}_t\!\left(\hat{f}_{p,t}(\bba_t)\right)\!+\eta^2  \sum_{t=1}^T\sum_{p=1}^P\bar{w}_{p,t} {\cal L}_t\!\left(\hat{f}_{p,t}(\bba_t)\right)^2\nonumber\\
	\!\geq\!&-\eta\sum_{t=1}^T{\cal L}_t\!\left(\hat{f}_{p,t}(\bba_t)\right)\!-\ln P.
\end{align}
Re-organizing the terms leads to
\begin{align}\label{eq:sreg:8}
	&\sum_{t=1}^T\sum_{p=1}^P\bar{w}_{p,t} {\cal L}_t\left(\hat{f}_{p,t}(\bba_t)\right)\\
	\leq &\sum_{t=1}^T{\cal L}_t\left(\hat{f}_{p,t}(\bba_t)\right)+\eta  \sum_{t=1}^T\sum_{p=1}^P\bar{w}_{p,t} {\cal L}_t\left(\hat{f}_{p,t}(\bba_t)\right)^2+\frac{\ln P}{\eta}\nonumber
	\end{align}
and the proof is complete, since ${\cal L}_t\left(\hat{f}_{p,t}(\bba_t)\right)^2\leq 1$ and $\sum_{p=1}^P\bar{w}_{p,t}=1$.
\end{proof} 

Since ${\cal L}_t(\cdot)$ is convex under (as1), Jensen's inequality implies 
\begin{align}
\label{eq:sreg:9}
{\cal L}_t\bigg(\sum_{p=1}^P \bar{w}_{p,t} \hat{f}_{p,t}(\bba_t)\bigg)\leq \sum_{p=1}^P \bar{w}_{p,t} {\cal L}_t\left(\hat{f}_{p,t}(\bba_t)\right).
\end{align}
Combining \eqref{eq:sreg:9} with  Lemma \ref{lemma10}, one arrives readily at 
	\begin{align}
		&\sum_{t=1}^T{\cal L}_t\bigg(\sum_{p=1}^P \bar{w}_{p,t} \hat{f}_{p,t}(\bba_t)\bigg)\nonumber\\
		\leq &\sum_{t=1}^T {\cal L}_{t}\left(\hat{f}_{p,t}(\bba_t)\right)+\eta T+\frac{\ln P}{\eta}\nonumber\\
	\stackrel{(a)}{\leq}  &\sum_{t=1}^T{\cal L}_t\left(\hat{f}_p^*(\bba_t)\right)+\frac{\ln P}{\eta}+\frac{\|\bbtheta_p^*\|^2}{2\eta}+\frac{\eta L^2T}{2}+\eta T
	\end{align}
where (a) follows due to Lemma \ref{lemma3} and because $\bbtheta_p^*$ is the optimal solution for any given $\kappa_p$. This proves Lemma \ref{lemma4}.

\section{Proof of Theorem \ref{theorem0}}\label{app.pf.theorem0}
To bound the performance relative to the best estimator $f^*(\bba_t)$ in the RKHS, the key step is to bound the approximation error. 
For a given shift-invariant $\kappa_p$, the maximum point-wise error of the RF kernel approximant is  bounded with probability at least
$
	1-2^8\big(\frac{\sigma_p}{\epsilon}\big)^2 \exp \big(\frac{-D\epsilon^2}{4N+8}\big), 
$ by \cite{rahimi2007}
\begin{align}
\label{ieq:1}
	\sup_{\bba_i,\bba_j\in{\cal X}} \left|\bbz_p^\top(\bba_i)\bbz_p(\bba_j)-\kappa_p(\bba_i,\bba_j)\right|<\epsilon 
\end{align}
where $\epsilon>0$ is a given constant, $D$ the number of features, while $M$ is the number of nodes already in the network, and $\sigma_p^2:=\mathbb{E}_p[\|\bbv\|^2]$ is the second-order moment of the RF vector norm induced by $\kappa_p$.  
Henceforth, for the optimal function estimator \eqref{eq.slot-opt} in ${\cal H}_p$ denoted by $f_p^*(\bba):=\sum_{t=1}^T\alpha_{p,t}^* \kappa_p(\bba,\bba_t)$, and its RF-based approximant $\check{f}^*_p:=\sum_{t=1}^T\alpha_{p,t}^* \bbz_p^\top(\bba)\bbz_p(\bba_t)\in{\cal F}_p$, we have
\begin{align}
\label{eq:sreg:5.a}
	&\left|\sum_{t=1}^T{\cal L}_t\left(\check{f}^*_p(\bba_t)\right)-\sum_{t=1}^T{\cal L}_t\left(f_p^*(\bba_t)\right)\right|\nonumber\\
\stackrel{(a)}{\leq}\, &\sum_{t=1}^T\left|{\cal L}_t\left(\check{f}^*_p(\bba_t)\right)-{\cal L}_t(f_p^*(\bba_t))\right|\nonumber\\
\stackrel{(b)}{\leq}\, &\sum_{t=1}^T L\left|\sum_{t'=1}^T\alpha_{p,t'}^*\bbz_p^\top(\bba_{t'})\bbz_p(\bba_t)-\sum_{t'=1}^T\alpha_{p,t'}^*\kappa_p(\bba_{t'},\bba_t)\right|\nonumber\\
\stackrel{(c)}{\leq}\, &\sum_{t=1}^T L\sum_{t'=1}^T|\alpha_{p,t'}^*|\left|\bbz_p^\top(\bba_{t'})\bbz_p(\bba_t)-\kappa_p(\bba_{t'},\bba_t)\right|
\end{align}
where (a) is due to the triangle inequality; (b) uses the Lipschitz continuity of the loss, and (c) is due to the Cauchy-Schwarz inequality.  
Combining with \eqref{ieq:1}, yields
\begin{eqnarray}
\label{eq:sreg:5}
&&\left|\sum_{t=1}^T{\cal L}_t(\check{f}^*_p(\bba_t))-\sum_{t=1}^T{\cal L}_t(f_p^*(\bba_t))\right|\nonumber\\
&\leq &\sum_{t=1}^T L\epsilon\sum_{t'=1}^T |\alpha_{p,t'}^*|\leq \epsilon L T C,~{\rm w.h.p.}
\end{eqnarray}
where we used that $C:=\max_p \sum_{t=1}^T |\alpha_{p,t}^*|$.
Under the kernel bounds in (as3), the uniform convergence in \eqref{ieq:1} implies that $\sup_{\bba_t,\bba_{t'}\in{\cal X}} \bbz_p^\top(\bba_t)\bbz_p(\bba_{t'})\leq 1+\epsilon$, w.h.p., which  leads to
\begin{align}
\label{eq:sreg:4}
	\!\!\!\left\|\bbtheta_p^*\right\|^2:=&\left\|\sum_{t=1}^T\alpha_{p,t}^*\bbz_p(\bba_t)\right\|^2\nonumber\\
	=&\left|\sum_{t=1}^T\sum_{t'=1}^T\alpha_{p,t}^*\alpha_{p,t'}^*\bbz_p^{\top}(\bba_t)\bbz_p(\bba_{t'})\right|
	\leq (1+\epsilon)C^2
\end{align}
where for the last inequality we used the definition of $C$. 

Lemma \ref{lemma4} together with \eqref{eq:sreg:5} and \eqref{eq:sreg:4} lead to the regret of the proposed Gradraker algorithm relative to the best static function in ${\cal H}_p$, that is given by 
	\begin{align}
		&\sum_{t=1}^T{\cal L}_t\bigg(\sum_{p=1}^P w_{p,t} \hat{f}_{p,t}(\bba_t)\bigg)-\sum_{t=1}^T{\cal L}_t(f_p^*(\bba_t))\nonumber\\
		=&\sum_{t=1}^T{\cal L}_t\bigg(\sum_{p=1}^P w_{p,t} \hat{f}_{p,t}(\bba_t)\bigg)-\sum_{t=1}^T{\cal L}_t\left(\check{f}^*_p(\bba_t)\right)\nonumber\\
		&+\sum_{t=1}^T{\cal L}_t\left(\check{f}^*_p(\bba_t)\right)-\sum_{t=1}^T{\cal L}_t(f_p^*(\bba_t))\nonumber\\
		\leq\,  &\frac{\ln P}{\eta}+\frac{\eta L^2T}{2}+\eta T+ \frac{(1+\epsilon)C^2}{2\eta}+\epsilon L T C,~{\rm w.h.p.}
	\end{align}
which completes the proof of Theorem \ref{theorem0}.

%
%


\end{document}